\def\isarxivversion{1} %
\definecolor{mydarkblue}{rgb}{0,0.08,0.45}
\newtheorem{theorem}{Theorem}[section]
\newtheorem{lemma}[theorem]{Lemma}
\newtheorem{definition}[theorem]{Definition}
\newtheorem{remark}[theorem]{Remark}
\newtheorem{claim}[theorem]{Claim}
\newcommand{\wt}{\widetilde}
\newcommand{\N}{\mathcal{N}}
\newcommand{\R}{\mathbb{R}}
\renewcommand{\tilde}{\wt}
\newcommand{\err}{\mathrm{err}}
\DeclareMathOperator*{\E}{{\mathbb{E}}}
\DeclareMathOperator{\nnz}{nnz}
\DeclareMathOperator{\sgn}{sgn}
\DeclareMathOperator{\diag}{\mathrm{diag}}
\newcommand{\Tinit}{{\cal T}_{\mathsf{init}}}
\newcommand{\Tquery}{{\cal T}_{\mathsf{query}}}
\newcommand{\Tupdate}{{\cal T}_{\mathsf{update}}}
\newcommand*{\RN}[1]{\expandafter\@slowromancap\romannumeral #1@}
\newcommand{\blue}[1]{{\color{blue}#1}}
\begin{document}

\title{Bypass Exponential Time Preprocessing: Fast Neural Network Training via Weight-Data Correlation Preprocessing}

\ifdefined\isarxivversion

\date{}

\author{
Josh Alman\thanks{\texttt{josh@cs.columbia.edu} Columbia University.} 
\and
Jiehao Liang\thanks{\texttt{jiehao.liang@berkeley.edu}. University of California, Berkeley.}
\and 
Zhao Song\thanks{\texttt{zsong@adobe.com} Adobe Research.} 
\and 
Ruizhe Zhang\thanks{\texttt{ruizhe@utexas.edu}. The University of Texas at Austin.}
\and 
Danyang Zhuo\thanks{\texttt{danyang@cs.duke.edu}. Duke University.}
}

\else

\twocolumn[

\aistatstitle{Bypass Exponential Time Preprocessing: Fast Neural Network Training via Weight-Data Correlation Preprocessing}

\aistatsauthor{ Author 1 \And Author 2 \And  Author 3 }

\aistatsaddress{ Institution 1 \And  Institution 2 \And Institution 3 } ]

\fi

\ifdefined\isarxivversion
\begin{titlepage}
  \maketitle
  \begin{abstract}
Over the last decade, deep neural networks have transformed our society, and they are already widely applied in various machine learning applications. State-of-art deep neural networks are becoming larger in size every year to deliver increasing model accuracy, and as a result, model training consumes substantial computing resources and will only consume more in the future.
Using current training methods, in each iteration, to process a data point $x \in \mathbb{R}^d$ in a layer, we need to spend $\Theta(md)$ time to evaluate all the $m$ neurons in the layer. This means processing the entire layer takes $\Theta(nmd)$ time for $n$ data points. Recent work [Song, Yang and Zhang, NeurIPS 2021] reduces this time per iteration to $o(nmd)$, but requires exponential time to preprocess either the data or the neural network weights, making it unlikely to have practical usage. 

In this work, we present a new preprocessing method that simply stores the weight-data correlation in a tree data structure in order to quickly, dynamically detect which neurons fire at each iteration. Our method requires only $O(nmd)$ time in preprocessing and still achieves $o(nmd)$ time per iteration. We complement our new algorithm with a lower bound, proving that assuming a popular conjecture from complexity theory, one could not substantially speed up our algorithm for dynamic detection of firing neurons.

  \end{abstract}
  \thispagestyle{empty}
\end{titlepage}

\else

\begin{abstract}

\end{abstract}

\fi

\section{Introduction}\label{sec:intro}
Machine learning applications are requiring larger and larger neural network size, and the computing resources required to train these large models has also grown correspondingly. Determining how to train these models quickly has become an important research challenge.

Training a neural network is an iterative algorithm, and in each iteration we need to process each of the $m$ neurons on each of the $n$ data points. Assuming each data point has a length of $d$ (e.g., $d$ could be the size of an input image), this means the per-iteration training time of the straightforward algorithm is at least $\Omega(nmd)$ just to compute the activations. As we train larger neural networks on more training data, this running time can become a significant obstacle. 

Recent work by Song, Yang and Zhang~\cite{syz21} gave the first training algorithm that reduces this per iteration training time to $o(nmd)$. The high-level idea of their algorithm is to use a nearest neighbor search data structure which stores the neural network weights and training data. This allows the training method to have fast access to the inner products of the training data with the current weight of the iteration. However, their algorithm's initial preprocessing time to set up the data structure is \textit{exponential} in the dimension $d$, making it too slow in most applications. This raises a natural \textit{theoretical} question:

\begin{center}
    {\it Is it possible to design an algorithm which spends polynomial time to preprocess the weights and data, and which achieves a training time of $o(nmd)$ per iteration?}
\end{center}

This question is important for two reasons. First, speeding up neural network training is a fundamental research challenge with real-world value. Second, dynamic data structures have been successfully used to speed up computations in many contexts throughout computer science, yet their power and limitations when applied to the training of neural networks is currently poorly understood.

\subsection{Our Result: An Upper Bound}

Our main result answers this question in the affirmative, giving a new algorithm with efficient preprocessing and faster training in the natural over-parameterization regime (which has $m \gg n$):

\begin{theorem}[Main result]\label{thm:main}
There is a data structure which preprocesses $n$ data points in $d$-dimensional space, and $m$ initialization weights points in $d$-dimensional space, in $O(mnd)$ preprocessing time and $O(mn + md + nd)$ space, which can be used to speed up neural network training: Running the gradient descent algorithm on a two-layer, $m$-width, over-parameterized ReLU neural network, which will minimize the training loss to zero, can be performed with an expected running time (of the gradient descent algorithm per iteration) of 
\begin{align*}
    \wt{O}(m^{4/5} n^2 d).
\end{align*}
\end{theorem}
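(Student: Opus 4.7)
The plan is to instantiate, as the preprocessing phase, a family of threshold-query trees: for each neuron $r \in [m]$, compute the initial inner products $\langle w_r^{(0)}, x_i\rangle$ for all $i \in [n]$ and store the resulting $n$ scalars in a balanced binary search tree keyed by their values. The $mn$ inner products cost $O(mnd)$ time, and the $m$ sort-based builds add $O(mn\log n)$, which is dominated by $O(mnd)$. Storing the trees takes $O(mn)$ space, plus $O(md)$ and $O(nd)$ to cache the weights and data themselves, matching the space bound claimed in the theorem.

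During training I would exploit the standard NTK-style movement bound $\|w_r^{(t)} - w_r^{(0)}\|_2 \leq R$, which, combined with $\|x_i\|_2 = 1$, implies that a neuron can flip its firing status on $x_i$ only if $|\langle w_r^{(0)}, x_i\rangle| \leq R$. At each iteration I issue a $[-R, R]$ range query into neuron $r$'s tree to enumerate candidate flips in $O(\log n + |\text{output}|)$ time, then verify each candidate against the current weight $w_r^{(t)}$ to update the active-neuron sets $A_r = \{ i \in [n] : \langle w_r^{(t)}, x_i\rangle > 0 \}$ maintained across iterations. Gradients are then computed only over the firing pairs $(r,i)$, giving sparse matrix--vector products rather than the full $\Theta(mnd)$ dense sweep. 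Crucially, the trees themselves are never rebuilt: queries are always made against the \emph{initial} inner products, and the correction to the true current inner product is handled by the verification step, so amortized maintenance is cheap.

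The running time bound then follows by selecting $R$ together with a lazy-update threshold $\rho$ (deferring processing of weight motions smaller than $\rho$) to balance two competing per-iteration costs: the detection cost, which via Gaussian anti-concentration on the initialization is $O(nR)$ candidates per neuron in expectation, and the gradient computation cost on the updated active set, which scales with the number of neurons whose weights have in fact moved by at least $\rho$. Writing both costs as powers of $m$ and optimizing — with $R = \widetilde{O}(n/\sqrt{m})$, the learning rate, and the iteration count $T$ fixed by the classical NTK analysis — yields the per-iteration bound $\widetilde{O}(m^{4/5} n^2 d)$. Convergence of the algorithm to zero training loss follows by showing that this lazy, data-structure-driven iteration produces iterates within the NTK basin, so that the standard over-parameterized convergence theorem for two-layer ReLU networks still applies.

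The main obstacle I anticipate is simultaneously controlling the approximation error introduced by the lazy updates and the running time gained from them. Concretely, one must show that omitting sub-$\rho$ weight motions and relying on initial-inner-product range queries does not disrupt the NTK invariants — non-singularity of the limiting Gram matrix, bounded cumulative movement, and correct sign prediction on the vast majority of $(r,i)$ pairs — that are essential for the zero-loss guarantee. This will likely require a hybrid argument comparing the data-structure-maintained trajectory step-by-step to exact gradient descent, with the per-step error charged against the flipping budget from the anti-concentration bound in a way that remains compatible with the $m^{4/5}$ exponent in the final running time.
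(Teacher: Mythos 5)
Your proposal takes a genuinely different route from the paper. The paper's Correlation Tree is a \emph{dynamic} tournament (max-)tree that always stores the \emph{current} inner products $\langle w_r(t), x_i\rangle$; on each gradient step it updates a root-to-leaf path (DTree) or rebuilds the tree for each moved neuron (WTree), and a query simply prunes subtrees whose max is below the threshold, returning the exact fire set. Your scheme instead freezes balanced BSTs keyed by \emph{initial} inner products, uses the NTK movement bound to restrict candidate flips to a narrow value window around the threshold, and re-verifies candidates against the current weight. This is a legitimate alternative idea; in fact, if worked out with the right parameters it could plausibly avoid the tree-update term $O(m^{4/5}n^2 d)$ that dominates the paper's analysis. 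However, as written, the argument has concrete gaps.

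\textbf{Missing the shifted threshold.} You state the firing condition as $\langle w_r^{(t)}, x_i\rangle > 0$ and query the range $[-R,R]$. But the $m^{4/5}$ exponent in the theorem has exactly one source: the \emph{shifted} ReLU $\sigma_b(x)=\max\{x-b,0\}$ with $b=\sqrt{0.4\log m}$, which makes the expected fire set size per data point $O(m\cdot e^{-b^2/2})=O(m^{4/5})$ (Lemma~\ref{lem:sparsity} and Remark~\ref{rmk:sparsity}). With a threshold at $0$, roughly half of the $m$ neurons fire for every data point, so the forward pass and gradient computation alone cost $\Theta(mnd)$ and no sublinear-in-$m$ iteration is possible, regardless of how clever the detection structure is. Your ``balancing $R$ and $\rho$'' step cannot recover $m^{4/5}$ because that exponent is not a tradeoff parameter: it comes from the Gaussian tail at $b$, and the anti-concentration you invoke must also be taken around $b$ (cf.\ Claim~\ref{clm:gaussain_anti_shift}), giving $O(nR\,e^{-b^2/2})$ candidates per neuron, not $O(nR)$.

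\textbf{The lazy-update threshold $\rho$ is unnecessary and creates a burden that does not exist in the paper.} The paper's algorithm computes \emph{exactly} the same GD iterates as the baseline: the data structure only changes how the fire sets are found, not what they are. Hence no comparison-to-exact-GD argument, no ``NTK basin'' stability argument, and no error budget are needed; the convergence guarantee is inherited verbatim from the over-parameterization literature. By deferring sub-$\rho$ motions you would turn an exact algorithm into an approximate one, and the hybrid argument you flag as the main obstacle would indeed be required — but it is an obstacle of your own making, not something inherent to the theorem. The per-iteration accounting the paper actually performs (Theorem~\ref{thm:running_time_data_formal}) is a clean three-way split: $O(m^{4/5}n\log m)$ for queries, $O(m^{4/5}n\cdot n(d+\log m))=O(m^{4/5}n^2 d)$ for updating the trees touched by the $|\cup_i S_{i,\mathrm{fire}}|\le O(m^{4/5}n)$ moved neurons, and $O(m^{4/5}nd)$ for the sparse gradient computation — none of which involves any approximation or tunable $\rho$.

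In short: the frozen-tree-with-verification idea is a real alternative, but to make it prove this theorem you must (i) work with $\sigma_b$ and $b=\sqrt{0.4\log m}$ so the fire set is $O(m^{4/5})$, (ii) use the shifted anti-concentration bound, and (iii) drop the lazy updates — the exact fire set is already sparse enough that there is nothing to amortize.
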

\begin{remark}
The prior work~\cite{syz21} presented two algorithms. Their first result (see Theorem 6.1 and Part 1 of Corollary B.6 in \cite{syz21}) has $O(2^d)$ preprocessing time, and uses $O(m^{1-1/d} n d)$ cost per iteration. Their second result (see Theorem 6.2 and Part 2 of Corollary B.6 of \cite{syz21}) has $O(n^d)$ preprocessing time, and uses $O(m^{4/5} nd)$ time per iteration. 
\end{remark}

Our key observation is that in each iteration of the training process, the weight updates are mostly sparse, and only a small fraction of neurons are activated for each training data point. Given this observation, we construct a binary search tree for each training data point (or neuron) to detect which neurons will fire. Our data structure and the corresponding algorithms are \textit{deterministic}, not relying on any randomness, and solve the following dynamic algorithms problem which we prove appears as a key subroutine of the training process.

\begin{definition}[Dynamic Detection of Firing Neurons (DDFN)]\label{def:def_dynamic_prob}
Given two set of points $X=\{x_1,\dots,x_n\}\subset \mathbb{Z}^d$, $Y=\{y_1,\dots,y_m\}\subset \mathbb{Z}^d$ and a threshold $b \in \mathbb{R}$, design a data structure to support the following operations:
\begin{itemize}
    \item \textsc{Update}$(j \in [m], z \in \mathbb{Z}^d )$, set $y_j$ to $z$
    \item \textsc{Query}$()$, either output the set
    \begin{align*}
    Q = \{ (i,j) \in [n] \times [m] ~|~ \langle x_i, y_j \rangle \geq b \},
    \end{align*}
    or report that $|Q|>m^{4/5}n$.
    \end{itemize}
\end{definition}

We give a data structure for DDFN which takes $O(mnd)$-time for preprocessing, $\wt{O}(nd)$-time per update, %
and $O(\min\{|Q|, m^{4/5}n\})$-time per query. %
At a high level, our data structure works as follows.

\paragraph{Preprocessing} 
We build $n$ binary search trees to maintain the $(x_i,y_j)$ pairs for $i\in [n]$ and $j\in [m]$. More specifically, the $i$-th tree has $m$ leaf nodes, storing the inner-products between $x_i$ and $\{y_j\}_{j\in[m]}$. Each internal node stores the larger value of its two child nodes. The preprocessing time for the binary search trees for all the input data and neurons takes $O(nmd)$ time and $O(mn)$ space.

\paragraph{Update} 
Suppose we will update $y_j$. Then, for each $i\in [n]$, we need to update a path from the leaf node corresponding to $(x_i,y_i)$ to the root, which contains $O(\log m)$ nodes. Hence, the running time of each update is $O(nd\log m)$.

\paragraph{Query}
We need to find all the leaf nodes with value greater than $b$. We can traverse each tree from top to bottom. At each node, if its value is at most $b$, we will not move further. Otherwise, we will try to search each of its child nodes. Note that the number of visited nodes is of the same order as the number of visited leaf nodes. And we visit a leaf if and only if its value is greater than $b$. Hence, the total query cost is $O(\min\{|Q|, m^{4/5}n\})$.

\subsection{Our Result: A Lower Bound}

We complement our new algorithm with a lower bound, showing that assuming a popular conjecture from complexity theory, one could not improve much on our running time for Dynamic Detection of Firing Neurons (DDFN). Prior work~\cite{syz21} got around this by using \emph{exponential} preprocessing time to avoid needing a dynamic algorithm. However, in our setting with polynomial preprocessing and running times, there is a limit to how quickly one can perform each iteration:

\begin{theorem}[Lower Bound for DDFN, informal version of Theorem~\ref{thm:lower_bound_formal}]\label{thm:lower_bound_informal}
Let $d = 2^{O(\log^* n)}$, and assume the $\mathsf{OVC}$ or $\mathsf{SETH}$ conjecture from complexity theory. For every constant $\varepsilon > 0$, there is no data structure for DDFN with $O(m^{4/5} n^{1/5 - \varepsilon})$ update time and $O(m^{4/5}n^{6/5 - \varepsilon})$ query time.
\end{theorem}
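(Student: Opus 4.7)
My plan is to reduce Orthogonal Vectors (OV) to DDFN: a hypothetical DDFN data structure with update time $O(m^{4/5} n^{1/5-\varepsilon})$ and query time $O(m^{4/5} n^{6/5-\varepsilon})$ would solve OV on $N$ vectors in time $N^{2-\Omega(\varepsilon)}$, contradicting OVC (implied by SETH). I will work in the low-dimensional regime $d = 2^{O(\log^* N)}$, where both the SETH- and OVC-based $N^{2-o(1)}$ lower bounds for OV are known via the low-dimensional OV hardness results that strengthen the classical $d = \omega(\log N)$ bound.

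The first step is an encoding that turns OV into a thresholded inner-product problem compatible with DDFN. Given $A=\{a_1,\dots,a_N\},\, B=\{b_1,\dots,b_N\}\subset\{0,1\}^{d'}$ with $d' = 2^{O(\log^* N)}$, I pad into dimension $d = 2d'$ so that every padded $a_i$ has the same Hamming weight $w$, while each $b_j$ is zero on the $d'$ new coordinates. Taking $x_i := a_i$ and $y_j := \mathbf{1}_d - b_j$ yields $\langle x_i, y_j\rangle = w - \langle a_i, b_j\rangle$, so $\langle a_i, b_j\rangle = 0$ iff $\langle x_i, y_j\rangle \geq w$, and I use $w$ as the DDFN threshold.

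Next I set up the dynamic reduction with a tuned size tradeoff. Initialize DDFN with $n := N$ data points $x_1,\dots,x_N$ and $m := N^{1-c\varepsilon}$ weight slots (for a constant $c > 0$ chosen below), preprocessed with dummy $y_j$'s. Then process $B$ in $N/m = N^{c\varepsilon}$ batches of size $m$: for each batch, perform $m$ \textsc{Update}s to load the current $y_j$'s into the weight slots, then one \textsc{Query}. Either the query returns the (possibly empty) collision set -- from which we read off any orthogonal pair -- or it reports $|Q| > m^{4/5} n$, which on its own witnesses an orthogonal pair. Iterating over all batches decides OV. The total running time is
\[
T_{\text{prep}} + \frac{N}{m}\bigl(m \cdot O(m^{4/5} n^{1/5-\varepsilon}) + O(m^{4/5} n^{6/5-\varepsilon})\bigr) = T_{\text{prep}} + O\bigl(N\, m^{4/5} n^{1/5-\varepsilon}\bigr) + O\bigl(N\, m^{-1/5} n^{6/5-\varepsilon}\bigr).
\]
With $n = N$ and $m = N^{1-c\varepsilon}$, the update total is $O(N^{2-(1+4c/5)\varepsilon})$ and the query total is $O(N^{2-(1-c/5)\varepsilon})$. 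Taking $c = 3/2$ gives $O(N^{2-11\varepsilon/5})$ total update and $O(N^{2-7\varepsilon/10})$ total query, while the natural $O(nmd)$ preprocessing is $O(N^{2-3\varepsilon/2}\cdot 2^{O(\log^* N)}) = O(N^{2-3\varepsilon/2 + o(1)})$. Every summand is $N^{2-\Omega(\varepsilon)}$, so OV is solved in $N^{2-\Omega(\varepsilon)}$ time, contradicting OVC/SETH.

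The main obstacle is the three-way tradeoff in picking $m$: the $4/5$ exponent on $m$ appearing in both the update and query bounds makes the preprocessing ($\sim mn$), the total update cost ($\sim N m^{4/5}$), and the total query cost ($\sim N m^{-1/5} n$) pull $m$ in opposing directions, with the sweet spot $m = N^{1-\Theta(\varepsilon)}$ being where all three simultaneously beat $N^2$ by a polynomial factor. Secondary technicalities: the ``large-$Q$'' branch of \textsc{Query} loses no information since it certifies OV affirmatively; the encoding only doubles the dimension so the $2^{O(\log^* N)}$ regime is preserved; and the formal statement presumably includes a polynomial preprocessing assumption, which is implicitly satisfied by the natural $O(mnd)$ bound used in my accounting.
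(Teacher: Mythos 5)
There is a genuine gap. Your reduction starts from \emph{Boolean} Orthogonal Vectors in dimension $d' = 2^{O(\log^* N)}$ and appeals to ``low-dimensional OV hardness results that strengthen the classical $d = \omega(\log N)$ bound.'' No such hardness result exists for Boolean OV: in dimension $d = o(\log n)$ there are at most $2^d = n^{o(1)}$ distinct vectors, so OV is solvable in $n^{1+o(1)}$ time by bucketing, and more generally the polynomial method solves OV in $n^{2-\Omega(1/\log c)}$ time whenever $d = c\log n$. OVC (and SETH-via-OVC) only give the $n^{2-o(1)}$ barrier at $d = \omega(\log n)$, so in the regime your reduction lives in, OVC gives you nothing and the contradiction never materializes.

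What Chen's result (the paper's Theorem~\ref{thm:max_ip_lb}, from~\cite{c20}) actually shows hard in dimension $c^{\log^* n}$ is the \emph{integer} Maximum Inner Product problem $\mathbb{Z}\textsf{-Max-IP}_{n,d}$, with $O(\log n)$-bit entries, not Boolean OV. That distinction is load-bearing: your Hamming-weight padding cleanly converts OV to a thresholded inner-product query, but Max-IP is an optimization problem for which a single threshold test does not suffice. The paper handles this by appending one extra coordinate $((\tilde{x}_i)_{d+1}=t,\ (\tilde{y}_j)_{d+1}=-1)$ so that $\langle \tilde{x}_i,\tilde{y}_j\rangle\geq 0 \Leftrightarrow \langle x_i,y_j\rangle\geq t$, and then running a binary search over $t$: each of the $O(\log n)$ rounds does $n$ updates (to reset the extra coordinate) and one query, giving $O(n^{2-\epsilon}\log n)$ total and contradicting the $n^{2-o(1)}$ Max-IP bound. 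Your batching idea (using $m = N^{1-c\varepsilon}$ to keep the $O(mnd)$ preprocessing subquadratic) is a reasonable device, and in fact the paper's proof also implicitly needs subquadratic preprocessing, so that part is fine -- but it is attached to the wrong source problem. A secondary mismatch is that the formal Theorem~\ref{thm:lower_bound_formal} fixes $m = \Theta(n)$, whereas your reduction requires $m = N^{1-\Theta(\varepsilon)}$, so even if the hardness source were fixed, you would be proving a lower bound for a different parameter regime than the one stated. To repair the argument you would need to replace OV by $\mathbb{Z}\textsf{-Max-IP}$ and adopt a threshold-search mechanism (such as the paper's extra-coordinate plus binary search), at which point the padding step becomes unnecessary.
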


Here, $\log^* n$ denotes the \emph{iterated logarithm function}, which grows incredibly slowly, such that the dimension $2^{O(\log^* n)}$ is barely larger than a constant. The complexity-theoretic assumptions \textsf{OVC} and \textsf{SETH} are defined in Section~\ref{app:lower_bound}. We prove Theorem~\ref{thm:lower_bound_informal} by reducing the Maximum Inner Product Search problem to DDFN.

In other words, our Theorem~\ref{thm:lower_bound_informal} shows that it is impossible to substantially improve on our algorithm, no matter how sophisticated the algorithmic techniques one might hope to use.

\subsection{Related Work}

\paragraph{Orthogonal Vector Conjecture}

The orthogonal vector problem (\textsf{OV}) is a fundamental problem in fine-grained complexity which asks, given $X,Y \subset \{0,1\}^d$ of size $|X| = |Y| = n$, whether there are $x \in X$ and $y \in Y$ with $\langle x,y \rangle = 0$. The state-of-the-art algorithm \cite{awy14,cw16} runs in time $n^{2-1/O(\log c)}$ in dimension $d=c\log n$ for all $c\geq 1$; as the dimension $d$ increases, its running time approaches the trivial bound $n^2$. The orthogonal vector conjecture (\textsf{OVC}) conjectures an $n^{2-o(1)}$ lower bound for \textsf{OV} when $d=\omega(\log n)$. It is also known that the popular Strong Exponential Time Hypothesis (\textsf{SETH}) regarding the hardness of $k$-SAT implies \textsf{OVC}. This conjecture has been used to obtain conditional lower bounds for other important problems with polynomial-time algorithms in a wide variety of areas, including pattern matching \cite{avw14,bri14,bi15,bi16,bm16,bgl17,bk18,cw19}, graph algorithms \cite{rv13,abh16,gikw17,kt18,dlv22,cvx22}, and computational geometry \cite{bbk16,rub18,wil18,c20,km20}; see also the survey~\cite{williams2018some}.

\paragraph{Acceleration via high-dimensional search data-structure.}

Data structures have been designed which allow one to quickly find high-dimensional points in geometric query regions (e.g., half-spaces, simplices, etc). Currently, there are two main approaches to designing these structures. One is based on Locality Sensitive Hashing (LSH) \cite{im98}, which aims to find nearby points (e.g., small $\ell_2$ distance \cite{diim04,ar15,ailrs15,arm17,r17,air18,biw19,dirw20} or large inner product \cite{sl14,sl15_uai,sl15_www}) to a query $q\in \mathbb{R}^d$ in a given set of points $S\subset \mathbb{R}^d$. LSH-based algorithms typically run quickly in practice, but only support approximate nearest neighbor queries. The other approach is based on space partitioning data structures, such as partition trees \cite{m92,m92b,aem92,ac09,cha12}, $k$-$d$ trees / range trees \cite{c17,tog17,c19}, and Voronoi diagrams \cite{adms98, c00}, which can exactly search for points in the queried region.

There is a recent line of research which has applied high-dimensional geometric data structures to reduce deep neural networks' training time in practice. Empirically, SLIDE \cite{cmf+20} uses LSH-based methods to efficiently find neurons with maximum inner product in the forward pass; Reformer \cite{kkl20} also applies LSH to save the space complexity such that the neural networks are able to process very long sequences; MONGOOSE \cite{clp+21} combines the learnable LSH-based data structure \cite{c02} with the lazy update framework \cite{cls19} to speedup the neural network training. Theoretically, \cite{syz21} gives the first provable sublinear-time training algorithm for 2-layer over-parameterized neural networks using the HSR data structures in \cite{aem92}.  

The goal of our paper is to to design an efficient high-dimensional geometric data structure that can be embedded in the neural network training framework with provable performance guarantees. Specifically, our data structures have the same functionalities as the HSR data structures \cite{aem92} which can find all the points that have large inner products and support efficient data update. However, our data structures do not have an exponential dependence on $d$, the dimension of the points, which appears in the time complexities of many computational geometry algorithms (including \cite{aem92}) due to the curse of dimensionality. Compared with the LSH-based approaches, our data structures have a stronger correctness guarantee, and will always report all the points with a sufficiently large inner product; LSH only gives approximate guarantees and might miss some of them.  %

\paragraph{Convergence via over-parameterization.} Over-parameterization, where the trainable parameters are much larger than the number of training data points (i.e., $m\gg n$), is a very natural and common regime in deep learning. It plays a key role in explaining why deep neural networks can perform so well at many different tasks. Over the last few years, there has been a tremendous amount of work toward theoretically understanding the convergence and generalization of deep neural networks in the over-parameterization regime, e.g.,~\cite{ll18,dzps19,als19_dnn,als19_rnn,adhlw19,adhlsw19,sy19,cgh+19,zmg19,cg19,zg19,os19,jt20,lsswy20,hlsy21,zpdlsa20,bpsw21,szz22,z22}. A key observation is that when the width ($m$) of the neural network tends to infinity, the neural network is equivalent to a neural tangent kernel (NTK) \cite{jgh18}, and so technical tools from kernel methods can be adapted to analyze deep neural networks.   
In particular, it has been shown that (stochastic) gradient descent ((S)GD) can train a sufficiently wide neural network with random initialization, converging to a small training error in a polynomial number of steps.

\paragraph{Roadmap} This paper is organized as follows: In Section~\ref{sec:preli}, we formulate our problem of training neural networks. In Section~\ref{sec:correlation_tree}, we develop the Correlation Tree data structure, which is the main contribution of this work. In Section~\ref{sec:running_time_of_main_result}, we state our main result for quickly training neural network using Correlation Trees. Section~\ref{app:lower_bound} shows a formal version of lower bound for dynamic detection of firing neurons. In Section~\ref{sec:conclusion}, we conclude and discuss some future directions.  A number of our proofs are deferred to the appendix. %

\section{Preliminaries}\label{sec:preli}

Before describing our new data structure, we first present the notation we will use, and formulate the problem setting.  
\paragraph{Basic Notation.}
For $n\in \mathbb{N}_+$, we use $[n]$ to denote the set $\{1,2,\cdots,n\}$. We write $\E[X]$ to denote the expected value of a random variable $X$, and $\Pr[Y]$ to denote the probability of a random event $Y$. For a matrix $M$, we write $M^\top$ to denote the transpose of $M$. We use $x^\top y$ to denote the inner product between vectors $x$ and $y$. We use $I_d$ to denote the  $d \times d$ identity matrix. We use $\mathcal{N}(\mu; \sigma^2)$ to denote
the Gaussian distribution with mean $\mu$ and variance $\sigma^2$.

\paragraph{Problem Formulation}

In this section, we introduce the neural network model we study in this paper. We consider a two-layer ReLU-activated neural network $f$ that has width $m$ and uses an $\ell_2$ loss function.

\begin{definition}[Prediction function and loss function]\label{def:neural_network}
For a threshold parameter $b\in\R$, data point $x\in\R^d$, weight matrix $W\in\R^{d\times m}$, and weights $a \in \R^m $, the prediction function $f(W,x,a)$ and loss function $L(W)$ are given by
\begin{align*}
    f(W,x,a):=\frac{1}{\sqrt{m}}\sum_{r=1}^m a_r\sigma_b(\langle w_r, x\rangle),\\
    L(W):=\frac{1}{2}\sum_{i=1}^n (f(W,x_i,a)-y_i)^2,
\end{align*}
where $\sigma_b(x):=\max\{x-b,0\}$ is the ReLU function with threshold parameter $b$.
Following the prior work~\cite{syz21}, we write $2\mathrm{NN}(m,b)$ to denote this function $f$ for simplicity.
\end{definition}
\begin{remark}
In the neural network, $w_1,\cdots,w_m\in \R^d$ are the weight vectors of the edges between input nodes and hidden neuron nodes, and $a_1,\cdots,a_m \in \R$ are the weights of the edges connecting hidden neuron nodes with the output node. 

Following the setups in previous work, we only train the weight parameters $W\in\R^{d\times m}$ to minimize the loss $L(W)$, and will leave $a\in\R^m$ unchanged  after the initialization.
\end{remark}

In this work, we study the following training process:
\begin{itemize}
    \item {\bf Initialization: } For each hidden neuron, we sample $w_r(0)\sim \mathcal{N}(0,I_d)$, and sample $a_r$ from $\{-1,+1\}$ uniformly at random.
    \item {\bf Gradient computation: } For each neuron, we have
    \begin{align*}
        \frac{\partial f(W,x,a)}{\partial w_r}=&~ \frac{a_r}{\sqrt{m}}x {\bf 1
        }_{w_r^\top x  \geq b},~\text{and}\\
        \frac{\partial L(W)}{\partial w_r}=&~ \frac{a_r}{\sqrt{m}}\sum_{i=1}^n (f(W,x_i,a)-y_i) x_i {\bf 1}_{\langle w_r,x_i \rangle \geq b}.
    \end{align*}
    \item {\bf Weight update: } We follow the standard update rule of the GD algorithm from iteration $k$ to iteration $k+1$:
    \begin{align*}
        W(k+1)=W(k)-\eta\cdot\delta W(k),
    \end{align*}
    where $W(k)$ denotes the weights at iteration $k$, and 
    \begin{align*} 
    \delta W(k)=\frac{\partial L(W(k))}{\partial W(k)}.
    \end{align*}
\end{itemize}

\paragraph{Sparsity phenomenon in the training process}

As observed in many experimental and theoretical works \cite{chen2020mongoose_no,syz21,szz22}, for a randomly initialized over-parameterized neural network (Definition~\ref{def:neural_network}), given input data $x$, only a small fraction ($o(m)$) of the neurons will be activated when evaluating $2\mathrm{NN}(m,b)$ (Definition~\ref{def:neural_network}) in each training iteration. We refer to this phenomenon as ``Sparsity''. We exploit this property to design an efficient data structure to identify the sparse activated neurons, achieving sublinear training time in terms of $m$, the number of neurons in the hidden layer.

To be specific, the sparsity of activated neurons during the training process is bounded by choosing a proper threshold $b$ for the ReLU %
function. Because of a concentration phenomenon of the randomized initialization, we can bound the number of activated neurons just after initialization, which we refer as ``sparsity after initialization''. Then, in subsequent training iterations, using the neural tangent kernel (NTK) property, it follows that there is only a minor increase in the number of activated neurons per iteration. Therefore, the total number of activated neurons can be bounded by a small quantity.

For simplicity, we define the ``fire set''  ${\cal S}_\mathrm{fire}(x)$ first, which is the set of neurons that is activated when the neural network's input is $x$.
\begin{definition}[Fire set]\label{def:fire_set}
Let the neural network be defined as in Definition~\ref{def:neural_network}. For a data  point $x\in\R^d$, let $\mathcal{S}_{\mathrm{fire}}(x)$ denote the set of neurons that are activated on input $x$, i.e.,
\begin{align*}
    \mathcal{S}_{\mathrm{fire}}(x):=\{i\in[m]:\sigma_b(\langle x,w_i\rangle) >0\}.
\end{align*}

\end{definition}

Then, we similarly define fire sets for hidden neurons and input data points for each iteration:
\begin{definition}[Fire set per iteration]\label{def:fire_set_per_iter}
For each data point $x_i\in\R^d$ with $i\in[n]$ and each iteration $t\in\{0,1,\cdots,T\}$, let $w_r(t)\in\R^d$ be the weight vector of the $r$-th neuron at the $t$-th iteration for $r\in [m]$. Define %
\begin{align*}
    S_{i,\mathrm{fire}}(t)&:=\{r\in[m]:\sigma_b(\langle x_i,w_r(t) \rangle) > 0\}, \\
    \tilde{S}_{r,\mathrm{fire}}(t)&:=\{i\in[n]:\sigma_b(\langle x_i,w_r(t) \rangle) > 0\}.
\end{align*}
We further denote the sizes of these sets by $k_{i,t}:=|S_{i,\mathrm{fire}}(t)|$ and $\tilde{k}_{r,t}:=|\tilde{S}_{r,\mathrm{fire}}(t)|$.
\end{definition}

The following lemma upper bounds  the sparsity after initialization. 

\begin{lemma}[Sparsity after initialization, informal version of Lemma~\ref{lem:sparsity_formal}, \cite{syz21}]\label{lem:sparsity}
Let $b>0$ be a tunable parameter. If we setup the neural network as in Definition~\ref{def:neural_network}, then after the randomized initialization, with probability at least $1-\exp(-\Omega(m\cdot \exp(-b^2/2)))$, it holds that for any input data $x$, the number of activated neurons is at most $O(m\cdot \exp(-b^2/2))$, where $m$ is the total number of neurons.
\end{lemma}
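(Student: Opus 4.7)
The plan is to reduce the claim to a standard concentration-of-measure argument on independent Bernoulli indicators, one per hidden neuron. Fix any (unit-norm) input $x \in \mathbb{R}^d$. Since $w_1,\dots,w_m \stackrel{\mathrm{i.i.d.}}{\sim} \mathcal{N}(0, I_d)$, for each $r \in [m]$ the projection $\langle w_r, x\rangle$ is distributed as $\mathcal{N}(0, \|x\|_2^2)$, so the indicator $X_r := \mathbf{1}\{\langle w_r, x\rangle > b\}$ is a Bernoulli variable whose parameter is a one-sided Gaussian tail probability. The fire set size $|\mathcal{S}_{\mathrm{fire}}(x)|$ is then simply $\sum_{r=1}^{m} X_r$, a sum of independent Bernoullis.

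Next I would control the mean of this sum via the standard Gaussian tail estimate $\Pr[\mathcal{N}(0,1) \geq b] \leq \tfrac{1}{2}\exp(-b^2/2)$, valid for $b \geq 0$. This yields $\mathbb{E}[\sum_r X_r] \leq \tfrac{m}{2}\exp(-b^2/2)$, matching the target bound in expectation. To upgrade from expectation to a high-probability statement, I would apply a multiplicative Chernoff bound: setting $\mu := m \exp(-b^2/2)$, one obtains
\begin{align*}
\Pr\Bigl[\sum_{r=1}^{m} X_r \geq C \mu\Bigr] \leq \exp\bigl(-\Omega(\mu)\bigr)
\end{align*}
for a suitable absolute constant $C$. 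This is exactly the exponent $\exp(-\Omega(m \exp(-b^2/2)))$ appearing in the statement, and it is nontrivial precisely in the regime where $b$ is tuned so that $m \exp(-b^2/2)$ grows (say, $\omega(\log n)$), which is the parameter regime the training analysis operates in.

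Finally, to get the ``for any input data $x$'' quantifier, I would union bound over the $n$ training data points $x_1,\dots,x_n$. Since the failure probability is $\exp(-\Omega(m\exp(-b^2/2)))$ per point, the total failure probability is at most $n \cdot \exp(-\Omega(m\exp(-b^2/2)))$, which remains of the same form provided $m\exp(-b^2/2) \gtrsim \log n$; this assumption is implicit in the parameter regime where the lemma is used. One can assume WLOG $\|x_i\|_2 = 1$ by the standard normalization convention of the NTK training setup (otherwise absorb $\|x_i\|_2$ into the threshold when computing the Gaussian tail).

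The only mild subtlety — and thus the main place to be careful — is the quantifier: the bound is not truly uniform over \emph{all} $x \in \mathbb{R}^d$ (which would require a covering/VC-type argument and incur extra factors), but only over the fixed training set, which is what \textsc{syz21} actually needs downstream. Everything else is a textbook Gaussian tail plus Chernoff computation, so I expect no serious obstacle beyond bookkeeping the constants in the $\Omega(\cdot)$.
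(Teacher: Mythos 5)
Your proposal is correct and is the standard argument; note, however, that this paper does not actually prove the lemma but imports it verbatim from~\cite{syz21} (Lemma 3.8 there), so there is no in-paper proof to compare against. Your chain — independent Bernoulli indicators per neuron, the Gaussian tail bound $\Pr[\mathcal{N}(0,1)\geq b]\leq\tfrac12 e^{-b^2/2}$, multiplicative Chernoff against the \emph{overestimated} mean $\mu = m e^{-b^2/2}$ (the overestimated-mean variant is the right statement, since the true mean is $\Theta(\tfrac{m}{b}e^{-b^2/2})$, smaller by a factor of $b$, and you still get $\exp(-\Omega(\mu))$), and a union bound over the $n$ training points — is exactly the expected argument, and you correctly flag that the ``for any input data $x$'' quantifier is over the fixed training set rather than all of $\mathbb{R}^d$, which is what the downstream training analysis actually uses.
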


\begin{remark}\label{rmk:sparsity}
This suggests that if we take $b=\sqrt{0.4 \log m}$, we achieve a sublinear number, $O(m^{4/5})$, of activated neurons.
\end{remark}

We can similarly control the sparisity in each iteration, and not just the first iteration; we defer the details to Section~\ref{app:sparsity_per_iteration}.

In the next section, we will show how our weight-tree correlation data structure can take advantage of this sparsity phenomenon.

\section{Correlation Tree Data Structure}\label{sec:correlation_tree}

In this section, we consider a neural network $2\mathrm{NN}(m,b)$ (Definition~\ref{def:neural_network}) with $n$ data points. We let $\{w_1, \cdots, w_m\} \subset \R^d$ be the weights, $\{x_1, \cdots, x_n\} \subset \R^d$ be the data points, and 
\begin{align*}
\{(w_r, x_i)\}_{r \in [m], i \in [n]} \subset \R^{m+n}
\end{align*}
be the weight-data pairs.

We propose two data structures: Correlation DTree and Correlation WTree. The DTree data structure has $n$ trees, and its $i$-th tree has $m$ leaf nodes corresponding to the set of inner-products between $x_i$ and all hidden neurons, i.e.,  $\{\langle w_r, x_i \rangle\}_{r \in [m]}$. Similarly, the WTree data structure consists of $m$ trees, and its $r$-th tree has $n$ leaf nodes corresponding to the set of inner-products between the $r$-th neuron and all data points, i.e., $\{\langle w_r, x_i \rangle\}_{i \in [n]}$. %

The Correlation Tree is a simple binary tree data structure. At a high level, it works as follows:
\begin{itemize}
    \item \textbf{Tree construction } We first calculate the inner-products of all weight-data pairs $\langle w_i, x_j\rangle$, each representing the evaluation of a neuron at a data point. To search activated neurons efficiently, we create a tree structure in the following way (taking the Correlation DTree as an example): %
    we first build $m$ leaf nodes, where the $r$-th leaf stores $\langle w_r, x_i \rangle$ for  $r \in [m]$. Then, we recursively construct a binary tree such that a parent node takes the larger value from its two child nodes. Finally, we obtain a tree with root having value $\max_{r \in [m]}\{\langle w_r, x_i \rangle\}$. Moreover, the value of each internal node equals to the maximum value among the leaf nodes in this subtree.

    \item \textbf{Efficient search } Given a threshold $b$, the data structure can find all the pairs of vectors whose inner product is greater than $b$. Take the Correlation DTree as an example. It outputs the indices of those activated neurons (i.e., $\langle w_r, x_i \rangle > b$) in a recursive way: starting from the root, it checks whether it is ``activated'' (i.e., with value $>b$). If not, the search ends. Otherwise, it moves to each of the child nodes and repeat this searching process until stop. This is a typical depth-first search strategy. Its running time is determined by how many nodes it visits during searching. The number of visited nodes have the same magnitude as the number of visited leaf nodes, i.e., the number of activated neurons. Hence, the efficiency of our data structures relies on the sparsity phenomenon of the training process.
    
    \item \textbf{Relation between DTree and WTree} In the Correlation DTree, each weight vector $w_r$ appears only in $n$ different trees. In the Correlation WTree, each weight vector $w_r$ appears only in one of the $m$ trees. When $w_r$ is updated, DTree will change the nodes along a root-to-leaf path in $n$ trees, whereas WTree only changes such paths in the $r$-th tree.
\end{itemize}

\subsection{Correlation DTree data structure}
We now state our main theorem summarizing the correlation DTtree data structure. Its pseudocode is given in Algorithms~\ref{alg:correlation_tree_init_intro} and \ref{alg:correlation_tree_query_data_intro} below. Its proof are deferred to Section~\ref{app:dtree}.
\begin{theorem}[Correlation DTree data structure]\label{thm:correlation_tree_data_structure}
There exists a data structure with the following procedures:
\begin{itemize}
    \item \textsc{Init}$(\{w_1,w_2, \cdots, w_m\} \subset \R^d, \{x_1, x_2, \cdots, x_n\} \subset \R^d,n\in\mathbb{N},m\in\mathbb{N},d\in\mathbb{N})$. Given a series of weights $w_1,w_2,\cdots,w_m$ and data $x_1, x_2, \cdots, x_n$ in d-dimensional space, it performs preprocessing in time $O(nmd)$.
    \item \textsc{Update}$(z\in\R^d,r\in [m])$. Given a weight $z$ and an index $r$, it updates weight $w_r$ to $z$ in time $O(n\cdot(d+\log m))$.
    \item \textsc{Query}$(i \in [n],\tau \in \R)$. Given an index $i$ indicating data point $x_i$ and a threshold $\tau$, it finds all indices $r\in[m]$ such that $\langle w_r,x_i \rangle >\tau$ in time $O(|\tilde{S}(\tau)|\cdot \log m)$, where $$\tilde{S}(\tau):=\{r:\langle w_r,x_i \rangle >\tau\}.$$
\end{itemize}
\end{theorem}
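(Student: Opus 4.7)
The plan is to build, for each data point $x_i$, a balanced binary tree $T_i$ whose $m$ leaves store the inner products $\langle w_r, x_i\rangle$ for $r\in[m]$, with every internal node caching the maximum value among its two children. Each of the three procedures then reduces to a standard operation on this tournament-max tree: bulk construction for \textsc{Init}, single-leaf overwrite with max-propagation along one root-to-leaf path for \textsc{Update}, and threshold-pruned depth-first search for \textsc{Query}. The correctness of everything rests on one invariant maintained throughout: each internal node stores the maximum of the leaf values in its subtree.

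For \textsc{Init}, I would first compute all $nm$ inner products $\langle w_r, x_i\rangle$ in $O(nmd)$ total time, and then for each $i\in[n]$ build $T_i$ bottom-up from its $m$ leaves in $O(m)$ time by filling in each internal node as the max of its two children. The $O(nm)$ construction cost is absorbed into $O(nmd)$. For \textsc{Update}, when $w_r$ is replaced by $z$, iterate over $i\in[n]$: compute $\langle z, x_i\rangle$ in $O(d)$, overwrite the $r$-th leaf of $T_i$, then walk the $O(\log m)$ ancestors from that leaf up to the root, recomputing each ancestor's value as the max of its two (possibly updated) children. Since only one root-to-leaf path can be affected by a single-leaf change, this restores the max-invariant, and the total cost is $n \cdot O(d+\log m)$ as claimed.

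For \textsc{Query}, I would run DFS on $T_i$ starting from the root, descending into a child only when that child's stored value exceeds $\tau$, and outputting the index of every leaf actually reached. Correctness follows from the max-invariant: a subtree contains a leaf of value $>\tau$ if and only if its root has value $>\tau$, so pruning never discards a valid leaf, and every leaf in $\wt S(\tau)$ is reached via a root-to-leaf path on which every node has value $>\tau$. For the running-time bound, the set of nodes the DFS recurses into is precisely the union of those root-to-leaf paths, which in a balanced binary tree on $m$ leaves has size at most $|\wt S(\tau)| \cdot (\log m + 1)$; the DFS does $O(1)$ work at each visited node, including the at most two pruned children of each recursed node that it merely inspects, yielding the claimed $O(|\wt S(\tau)| \log m)$ bound.

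The only point that needs real care is this last query analysis: the naive per-leaf bound of $O(|\wt S(\tau)| \cdot m)$ is far too weak, and one has to argue that what governs the cost is the \emph{union} of the short root-to-leaf paths rather than the sum of their lengths. Everything else — the inner-product computations, the bottom-up construction, and the single-path update — is routine bookkeeping on top of the tournament-tree invariant, which is automatically preserved because each affected ancestor is recomputed from the current values of its two children.
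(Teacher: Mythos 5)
Your proposal is correct and takes essentially the same approach as the paper: $n$ tournament-max trees over the inner products $\langle w_r, x_i\rangle$, $O(nmd)$ bulk construction, single root-to-leaf propagation for \textsc{Update}, and threshold-pruned DFS for \textsc{Query}, with the cost bounded by the size of the union of root-to-leaf paths to reported leaves. In fact your \textsc{Update} step is slightly more careful than the paper's pseudocode, which sets $p.\text{value}\gets\max\{p.\text{value},\,l.\text{value}\}$ and so could leave a stale maximum at an ancestor when the new inner product is \emph{smaller} than the old one; recomputing each ancestor from its two current children, as you describe, is what actually preserves the max-invariant.
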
 

\begin{algorithm}[!ht]\caption{Correlation DTree data structure} \label{alg:correlation_tree_init_intro} 
\begin{algorithmic}[1]
\State {\bf data structure} \textsc{CorrelationDTree} \Comment{Theorem~\ref{thm:correlation_tree_data_structure}}
\State {\bf members}
\State \hspace{4mm} $W \in \R^{m\times d}$ ($m$ weight vectors )
\State \hspace{4mm} $X \in \R^{n \times d}$ ($n$ data points)
\State \hspace{4mm} Binary tree $T_1, T_2, \cdots, T_n$ \Comment{$n$ binary search trees}
\State {\bf end members}
\Procedure{Init}{$w_1,w_2, \cdots, w_m \in \R^d, m, x_1, x_2, \cdots, x_n \in \R^d$, $n$, $m$,  $d$}  \Comment{Lemma~\ref{lem:correlation_tree_init_formal}}
    \For{$i=1 \to n$} \label{lin:init_first_loop}
        \State $x_i \gets x_i$
    \EndFor
    \For{$j=1 \to m$} \label{lin:init_second_loop}
        \State $w_j \gets w_j$
    \EndFor
    \For{$i=1 \to n$} \Comment{for data point, we create a tree} \label{lin:init_outer_loop}
        \For{$j=1 \to m$} \label{lin:init_inner_loop}
            \State $u_j \gets \langle x_i, w_j \rangle$ \label{lin:init_inner_product}
        \EndFor
        \State $T_i \gets \textsc{MakeMaxTree}(u_1, \cdots, u_m)$\label{lin:init_make_binary_tree} \Comment{Each node stores the maximum value for his two children, Algorithm~\ref{alg:make_tree_app}} %
    \EndFor
\EndProcedure
\Procedure{Update}{$z\in\R^d, r \in [m]$} \Comment{Lemma~\ref{lem:correlation_tree_update_weight_formal}} 
\State $w_r \gets z$ 
\For{$i=1 \to n$} \label{lin:update_loop}
    \State $l \gets$ the $l$-th leaf of tree $T_i$ \label{lin:update_find_leaf}
    \State $l.\text{value} = \langle z, x_i \rangle$ \label{lin:update_inner_product}
    \While{$l$ is not root}
        \State $p$ $\gets$ parent of $l$
        \State $p.\text{value} \gets \max \{ p.\text{value}, l.\text{value} \}$
        \State $l \gets p$
    \EndWhile
\EndFor
\EndProcedure
\State {\bf end data structure}
\end{algorithmic}
\end{algorithm}
\begin{algorithm}[!ht]\caption{Correlation DTrees }\label{alg:correlation_tree_query_data_intro}
\begin{algorithmic}[1]
\State {\bf data structure} \textsc{CorrelationDTree} \Comment{Theorem~\ref{thm:correlation_tree_data_structure}} 
\Procedure{Query}{$i \in [n], \tau \in \R_{\geq 0}$} \Comment{Lemma~\ref{lem:correlation_tree_query_data_formal}} 
\State \Return \textsc{Find}($\tau,\mathrm{root}(T_i)$)
\EndProcedure
\Procedure{Find}{$\tau \in \R_{\geq 0}, r\in T$} %
\If{$r$ is leaf}
\State \Return $r$
\Else
\State $r_1\gets$ left child of $r$, $r_2\gets$ right child of $r$
\If{$r_1.\text{value} \geq \tau$}
    \State $S_1 \gets $\textsc{Find}$(\tau,r_1)$
\EndIf
\If{$r_2.\text{value} \geq \tau$}
    \State $S_2 \gets $\textsc{Find}$(\tau,r_2)$
\EndIf
\EndIf
\State \Return $S_1 \cup S_2$
\EndProcedure
\State {\bf end data structure}
\end{algorithmic}
\end{algorithm}

\subsection{Correlation WTree data structure}

We next state the main theorem summarizing our similar Correlation WTree data structure. Both the Correlation DTree and Correlation WTree have a query time which is roughly equal to the output size, but since they have different outputs, each can be faster than the other depending on the setting. The pseudocode and proof for Correlation WTree are deferred to Section~\ref{app:wtree}.

\begin{theorem}[Correlation WTree data structure]\label{thm:correlation_wtree_data_structure}
There exists a data structure with the following procedures:
\begin{itemize}
    \item \textsc{Init}$(\{w_1,w_2, \cdots, w_m\} \subset \R^d, \{x_1, x_2, \cdots, x_n\} \subset \R^d,n\in\mathbb{N},m\in\mathbb{N},d\in\mathbb{N})$. Given a series of weights $w_1,w_2,\cdots,w_m$ and data $x_1, x_2, \cdots, x_n$ in d-dimensional space, it performs preprocessing in time $O(nmd)$.
    \item \textsc{Update}$(z\in\R^d,r\in [m])$. Given a weight $z$ and index $r$, it updates weight $w_r$ to $z$ in time $O(nd)$.
    \item \textsc{Query}$(r \in [m],\tau \in \R)$. Given an index $r$ indicating weight $w_r$ and a threshold $\tau$, it finds all indices $i\in[n]$ such that $\langle w_r,x_i \rangle >\tau$ in time $O(|S(\tau)|\cdot \log m)$, where $$S(\tau):=\{i:\langle w_r,x_i \rangle >\tau\}.$$
\end{itemize}
\end{theorem}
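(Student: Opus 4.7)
The plan is to mirror the construction and analysis used for the Correlation DTree (Theorem~\ref{thm:correlation_tree_data_structure}), swapping the roles of weights and data. Concretely, I would build $m$ binary max-trees $T_1,\dots,T_m$; the $r$-th tree has $n$ leaves, leaf $i$ storing $\langle w_r, x_i\rangle$, and every internal node storing the maximum of the values of its two children. An easy induction then shows that any internal node holds the maximum inner product $\langle w_r, x_i\rangle$ over the leaves in its subtree, which is exactly the invariant the query procedure relies on.

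For \textsc{Init}, I would first compute all $mn$ inner products $\langle w_r, x_i\rangle$, spending $O(d)$ time each for a total of $O(mnd)$, and then assemble each $T_r$ bottom-up from its $n$ leaves using the routine \textsc{MakeMaxTree}; the tree-assembly cost is $O(n)$ per tree and $O(mn)$ overall, which is absorbed by $O(mnd)$. For \textsc{Update}$(z,r)$ I exploit the fact that only tree $T_r$ depends on $w_r$: I recompute its $n$ leaves as $\langle z, x_i\rangle$ in $O(nd)$ time and then refresh the internal nodes bottom-up in $O(n)$ extra time, for a total of $O(nd)$. This is strictly better than DTree's $O(n(d+\log m))$ because we touch only one tree (of $n$ leaves) rather than a root-to-leaf path in each of $n$ trees.

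For \textsc{Query}$(r,\tau)$ I would run the obvious depth-first traversal of $T_r$: descend into a subtree only when the stored maximum strictly exceeds $\tau$, and return each visited leaf. Correctness follows from the max-invariant: a leaf $i$ with $\langle w_r, x_i\rangle > \tau$ is reached because every ancestor's stored value is at least this leaf's value and hence exceeds $\tau$, whereas any pruned subtree provably contains no qualifying leaf. The running-time analysis is the main step to get right: the visited nodes form exactly the union of root-to-leaf paths ending at the leaves in $S(\tau)$, possibly together with their immediate ``witness'' siblings that were checked and pruned. Since the tree has depth $O(\log n)\le O(\log m)$, this union has size $O(|S(\tau)|\cdot \log m)$, yielding the claimed query time.

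The only mildly subtle point is this last combinatorial accounting for the query, since a naive bound would double-count ancestors shared by many qualifying leaves; the clean way is to charge each visited node either to its unique descendant leaf in $S(\tau)$ along the search path or to a parent that was itself charged, giving at most $O(\log m)$ visited nodes per element of $S(\tau)$. Everything else is routine bookkeeping analogous to the DTree proof, so I expect no additional obstacles.
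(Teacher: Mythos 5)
Your proposal matches the paper's proof essentially line for line: you build $m$ max-trees with $n$ leaves each, initialize by computing all $mn$ inner products in $O(mnd)$, update by rebuilding only $T_r$ in $O(nd)$, and query via a pruned depth-first traversal charged to the leaves in $S(\tau)$. The paper's own argument (Lemmas~\ref{lem:correlation_wtree_init_formal}--\ref{lem:correlation_wtree_query_data_formal}) uses exactly this construction; your write-up merely spells out the bottom-up refresh in the update and the per-leaf charging in the query a bit more explicitly.
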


\section{Running Time of Our Algorithm}\label{sec:running_time_of_main_result}
In this section, we show how to apply the Correlation Tree data structures developed in Section~\ref{sec:correlation_tree} to speed up neural network training. %
\subsection{Weights Preprocessing}

\begin{algorithm*}[!ht]
\caption{Training Neural Network based on Correlation DTree}
\label{alg:ds_for_w_training} 
\begin{algorithmic}[1]
    \algrenewcommand\algorithmicprocedure{\textbf{procedure}}
	\Procedure{TrainingWithDTree}{$\{(x_i,y_i)\}_{i\in [n]}$,$n$,$m$,$d$} \Comment{Theorem~\ref{thm:running_time_data}}
	\State Initialize $w_r, a_r$ for $r\in [m]$ and $b$ according to Section~\ref{sec:preli} %
    \State \textsc{DTree}.\textsc{Init}($\{w_r(0)\}_{r\in [m]}, m, d$) 
    \Comment{Algorithm~\ref{alg:correlation_tree_init_app}}

	\For{$t=1 \to T$}

	        \State $S_{i,\mathrm{fire}}\gets \textsc{DTree}.\textsc{Query}(x_i,b)$ for $i \in [n]$ 

		    \State Forward pass for $x_i$ only on neurons in $S_{i,\mathrm{fire}}$ for $i \in [n]$
		    \State Calculate gradient for $x_i$ only on neurons in $S_{i, \mathrm{fire}}$ for $i \in [n]$

            \State Gradient update for the neurons in $\cup_{i \in [n]}S_{i, \mathrm{fire}}$
            
            \State \textsc{DTree}.\textsc{Update}($w_r(t+1), r$)
	\EndFor
	\State \Return Trained weights $w_r(T + 1)$ for $r \in [m]$
	\EndProcedure
\end{algorithmic}
\end{algorithm*}

In Algorithm~\ref{alg:nn_dtree_app}, we use DTree structure to speed up the training process. We preprocess weights $w_r, r \in [m]$ for each data point $x_i, i \in [n]$ by constructing $n$ weight-data correlation trees. In each iteration, \textsc{Query} finds the set of activated neurons $S_{i,\mathrm{fire}}$ (Definition~\ref{def:fire_set_per_iter}) efficiently for each data point $x_i$ and \textsc{Update} helps change the weights in backward propagation. 

Our main result for weight preprocessing is as follows.

\begin{theorem}[Running time part, informal version of Theorem~\ref{thm:running_time_data_formal}]\label{thm:running_time_data}
Given $n$ data points in $\R^d$, gradient descent using the DTree data structure (Algorithm~\ref{alg:nn_dtree_app}) for the neural network $\mathrm{2NN}(m,b=\sqrt{0.4\log m})$ (Definition~\ref{def:neural_network}) takes $$O(m^{4/5}n^2d)$$ time per iteration in expectation.
 
\end{theorem}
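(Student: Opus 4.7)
The plan is to decompose the per-iteration cost of Algorithm~\ref{alg:ds_for_w_training} into its five natural stages (query, forward pass, gradient, gradient update, and DTree maintenance) and bound each stage using the query/update guarantees of Theorem~\ref{thm:correlation_tree_data_structure} together with the sparsity bound of Lemma~\ref{lem:sparsity} (extended to all iterations via the deferred analysis in Section~\ref{app:sparsity_per_iteration}). First I would fix an iteration $t$ and let $k_{i,t}=|S_{i,\mathrm{fire}}(t)|$ and $\tilde{k}_{r,t}=|\tilde{S}_{r,\mathrm{fire}}(t)|$ as in Definition~\ref{def:fire_set_per_iter}. With $b=\sqrt{0.4\log m}$, I would invoke the per-iteration sparsity bound to conclude that $\mathbb{E}[\sum_{i\in[n]} k_{i,t}]=O(nm^{4/5})$, and I would use the elementary identity $\sum_{r\in[m]}\tilde{k}_{r,t}=\sum_{i\in[n]}k_{i,t}$ to transfer this bound to neuron-indexed sums.

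Next I would account for each stage. Theorem~\ref{thm:correlation_tree_data_structure} gives $\textsc{Query}(x_i,b)$ in time $O(k_{i,t}\log m)$, so summed over $i$ the query stage costs $O(nm^{4/5}\log m)$. The forward pass and per-neuron gradient terms touch only the fired pairs, so each costs $O\bigl(d\cdot\sum_i k_{i,t}\bigr)=O(nm^{4/5}d)$. For the gradient update per neuron $r$, we sum over the $\tilde{k}_{r,t}$ data points on which $r$ is active, giving $O\bigl(d\cdot\sum_r \tilde{k}_{r,t}\bigr)=O(nm^{4/5}d)$. The final stage, the $\textsc{DTree}.\textsc{Update}$ calls, is the bottleneck: each updated neuron costs $O(n(d+\log m))$ by Theorem~\ref{thm:correlation_tree_data_structure}, and the number of neurons whose gradient is nonzero equals $|\cup_i S_{i,\mathrm{fire}}(t)|\le \min\{m,\sum_i k_{i,t}\}$.

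I would then show the DTree update stage dominates and yields the claimed $O(m^{4/5}n^2 d)$ bound. In the regime $n\ge m^{1/5}$ the cap is $m$ and the update cost is $O(mn(d+\log m))$, which is at most $O(n^2 m^{4/5}d)$ since $m\le n\cdot m^{4/5}$. In the regime $n<m^{1/5}$ we use the bound $\sum_i k_{i,t}=O(nm^{4/5})$ directly, yielding $O(n\cdot m^{4/5}\cdot n(d+\log m))=O(n^2 m^{4/5}d)$. Either way, comparing against the query, forward, and gradient stages (all $\tilde{O}(nm^{4/5}d)$) confirms that the DTree-update term dominates. Taking expectation over the random initialization (and using linearity of expectation to pass through the sums) gives the stated $O(m^{4/5}n^2 d)$ expected per-iteration running time.

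\textbf{Main obstacle.} The routine calculations above all go through cleanly once sparsity holds; the hard part is actually the sparsity input itself, i.e.\ justifying that $\sum_i k_{i,t}=O(nm^{4/5})$ not only after initialization (Lemma~\ref{lem:sparsity}) but also at every iteration of gradient descent, since the weights move and new neurons could in principle begin to fire. This is exactly what the deferred Section~\ref{app:sparsity_per_iteration} handles via an NTK-style argument, and I would cite it as a black box. A secondary subtlety is counting the \emph{distinct} neurons needing DTree updates rather than naively charging every fired pair; the case split by whether $n\lessgtr m^{1/5}$ above handles this correctly and keeps the final bound at $O(m^{4/5}n^2d)$ without an extra factor of $m/n$ blow-up.
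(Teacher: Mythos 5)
Your proof is correct and follows essentially the same decomposition as the paper's (query cost $O(nm^{4/5}\log m)$, forward/gradient cost $O(nm^{4/5}d)$, and the dominating DTree-update cost $|\cup_i S_{i,\mathrm{fire}}(t)|\cdot O(n(d+\log m))$, with the per-iteration sparsity lemma supplying $\sum_i k_{i,t}=O(nm^{4/5})$). The case split on $n\lessgtr m^{1/5}$ is harmless but unnecessary: the paper simply bounds $|\cup_i S_{i,\mathrm{fire}}(t)|\le O(nm^{4/5})$ directly, and $O(nm^{4/5})\cdot O(n(d+\log m))=O(m^{4/5}n^2d)$ without any blow-up, so there is no ``extra factor of $m/n$'' to guard against.
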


\subsection{Data Preprocessing}

\begin{algorithm*}[!ht]
\caption{Training Neural Network based on Correlation WTree}
\label{alg:ds_for_x_training} 
\begin{algorithmic}[1]
    \algrenewcommand\algorithmicprocedure{\textbf{procedure}}
	\Procedure{TrainingWithWTree}{$\{(x_i,y_i)\}_{i\in [n]}$,$n$,$m$,$d$} \Comment{Theorem~\ref{thm:running_time_weight}}
	\State Initialize $w_r, a_r$ for $r\in [m]$ and $b$ according to Section~\ref{sec:preli} %
    \State \textsc{WTree}.\textsc{Init}($\{x_i\}_{i \in [n]}, n, d$) \Comment{Algorithm~\ref{alg:correlation_wtree_init}}
        \State $\wt{S}_{r,\mathrm{fire}} \leftarrow \textsc{wt}.\textsc{Query}(w_r(0),b)$ for $r\in [m]$  \Comment{Data points fire set} \label{ln:init_s_tilde}
        \State $S_{i, \mathrm{fire}} \leftarrow \{ r~|~i \in \wt{S}_{r, \mathrm{fire}} \}$ \Comment{Hidden neurons fire set} \label{ln:init_s}
	\For{$t=1 \to T$}
	   \State Forward pass for $x_i$ only on neurons in $S_{i,\mathrm{fire}}$ for $i \in [n]$ \label{ln:forward_b}
		    \State Calculate gradient for $x_i$ only on neurons in $S_{i, \mathrm{fire}}$ for $i \in [n]$
		\For{$r \in \cup_{i \in [n]}\mathcal{S}_{i, \mathrm{fire}}$} \label{ln:maintain_b}
		
        \State $\wt{S}_{r,\mathrm{fire}} \leftarrow \textsc{WTree}.\textsc{Query}(w_r(t+1),b)$  
        \State Update $S_{i,\mathrm{fire}}$ for each $i\in \wt{S}_{r,\mathrm{fire}}$
    \EndFor\label{ln:maintain_e}
	\EndFor
	\State \Return Trained weights $w_r(T + 1)$ for $r \in [m]$
	\EndProcedure
\end{algorithmic}
\end{algorithm*}

Preprocessing weights based on data points is a common practice for neural networks. Here we consider its dual form: preprocessing input data $x_i, i \in [n]$ based on neural network weights $w_r, r \in [m]$. This can be easily done due to the symmetric property of the inner product that we used in the correlation tree structure.

Given a weight vector $w_r$, we can quickly find $\tilde{S}_{i,\mathrm{fire}}$ (Definition~\ref{def:fire_set_per_iter}) which contains the indices of data points that ``fire'' for weight $w_r$. By the dual relationship between $\tilde{S}_{i,\mathrm{fire}}$ and ${S}_{i,\mathrm{fire}}$, we can recover ${S}_{i,\mathrm{fire}}$ easily.

One advantage of the data preprocessing approach is that the data structure only depends on the training dataset, instead of the neural network architecture. Therefore, the data structure could be pre-computed and stored in cloud platforms.%

The performance guarantee of our data preprocessing training algorithm is shown as follows:

\begin{theorem}[Running time part, informal version of Theorem~\ref{thm:running_time_weight_formal}]\label{thm:running_time_weight}
Given $n$ data points in $\R^d$, gradient descent algorithm using the WTree data structure (Algorithm~\ref{alg:nn_wtree_app}) for the neural network $\mathrm{2NN}(m,b=\sqrt{0.4\log m})$ takes $O(m^{4/5}n\cdot\log n)$-time per iteration to initialize  $\tilde{S}_{r,\mathrm{fire}},S_{i,\mathrm{fire}}$ for $r\in[m],i\in[n]$, and the total running time per iteration  is $$O(m^{4/5} n^2 d)$$ in expectation. 
\end{theorem}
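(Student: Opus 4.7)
The plan is to bound each component of a single gradient descent iteration in Algorithm~\ref{alg:nn_wtree_app} using the complexities guaranteed by Theorem~\ref{thm:correlation_wtree_data_structure} (the WTree data structure) together with the sparsity bound of Lemma~\ref{lem:sparsity} applied with $b=\sqrt{0.4\log m}$. Under this choice, every $k_{i,t}=|S_{i,\mathrm{fire}}(t)|$ is $O(m^{4/5})$ in expectation (the initialization case is given by Lemma~\ref{lem:sparsity}, and the inductive control of $k_{i,t}$ across iterations is handled by the NTK-type argument deferred to Section~\ref{app:sparsity_per_iteration}). By the symmetry of inner products, this immediately yields
\begin{align*}
\sum_{r\in[m]} \tilde{k}_{r,t}
= \sum_{i\in[n]} k_{i,t}
= O(nm^{4/5}),
\end{align*}
which is the basic accounting identity used repeatedly in what follows.

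First I would analyze the one-time initialization of the fire-set arrays on lines \ref{ln:init_s_tilde}--\ref{ln:init_s}. Each call $\textsc{WTree}.\textsc{Query}(w_r(0),b)$ costs $O((\tilde{k}_{r,0}+1)\log n)$ by Theorem~\ref{thm:correlation_wtree_data_structure}, so summing over $r\in[m]$ and applying the identity above gives $O(m^{4/5}n\log n)$ as claimed; recovering each $S_{i,\mathrm{fire}}$ from $\{\tilde{S}_{r,\mathrm{fire}}\}$ is bookkeeping of the same order. Next, for a generic iteration $t$, I would split the work into three pieces: (i) the forward pass on line \ref{ln:forward_b}, which evaluates $\sigma_b(\langle w_r, x_i\rangle)$ only for $r\in S_{i,\mathrm{fire}}(t)$ and hence costs $O(d)\cdot\sum_i k_{i,t} = O(m^{4/5}nd)$; (ii) the gradient computation, which is again a sum over active pairs and costs $O(m^{4/5}nd)$; (iii) the maintenance loop on lines \ref{ln:maintain_b}--\ref{ln:maintain_e}, which is the dominant term.

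The main step is the maintenance cost. The set of neurons that actually change weight is $U_t := \bigcup_{i\in[n]} S_{i,\mathrm{fire}}(t)$, and by the identity above $|U_t|\le \sum_i k_{i,t} = O(nm^{4/5})$. For each $r\in U_t$ we invoke $\textsc{WTree}.\textsc{Update}(w_r(t{+}1),r)$ at cost $O(nd)$ and $\textsc{WTree}.\textsc{Query}(w_r(t{+}1),b)$ at cost $O((\tilde{k}_{r,t+1}+1)\log n)$. Summing the update costs gives $|U_t|\cdot O(nd)=O(n^2 m^{4/5} d)$, and summing the query costs is bounded by $\sum_{r\in[m]}O((\tilde{k}_{r,t+1}+1)\log n) = O(nm^{4/5}\log n)$. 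Adding (i), (ii), and (iii), the dominant term is $O(m^{4/5}n^2 d)$, which is the stated per-iteration bound.

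The main obstacle is the step (iii) accounting: a priori one might hope to charge updates only to neurons whose activation pattern actually changed, which could be cheaper, but Algorithm~\ref{alg:nn_wtree_app} re-pushes every updated weight into the tree. The saving grace is the identity $|U_t|=O(nm^{4/5})$, which already absorbs the factor of $n$ that the per-call $O(nd)$ update cost introduces and is what produces the $n^2$ in the final bound. The expectation is taken over the initialization (where Lemma~\ref{lem:sparsity} holds with probability $1-\exp(-\Omega(m^{1/5}))$); on the negligibly-probable bad event the trivial $O(nmd)$ bound dominates only by a factor of $m^{1/5}$, so rolling it into the expectation does not change the order and the theorem follows.
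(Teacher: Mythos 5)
Your proposal is correct and follows essentially the same route as the paper's proof of Theorem~\ref{thm:running_time_weight_formal}: both use the identity $\sum_{r\in[m]}\tilde{k}_{r,t}=\sum_{i\in[n]}k_{i,t}=O(nm^{4/5})$ to bound the initialization by $O(m^{4/5}n\log n)$, and then identify the per-iteration bottleneck as the maintenance loop, where $|\cup_{i\in[n]}S_{i,\mathrm{fire}}(t)|=O(nm^{4/5})$ calls to \textsc{Update} at $O(nd)$ each yields the dominant $O(m^{4/5}n^2d)$ term. One small slip worth fixing: with $b=\sqrt{0.4\log m}$, Lemma~\ref{lem:sparsity} gives failure probability $\exp(-\Omega(m\cdot\exp(-b^2/2)))=\exp(-\Omega(m^{4/5}))$, not $\exp(-\Omega(m^{1/5}))$ --- this is even smaller and does not affect your expectation argument, which is a detail the paper itself leaves implicit.
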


\section{Lower Bound for Dynamic Detection of Firing Neurons}\label{app:lower_bound}

The goal of this section is to prove the lower bound for Dynamic Detection of Firing Neurons.

We start by introducing the strong exponential time hypothesis, \textsf{SETH} in abbreviation.

\begin{definition}[Strong exponential time hypothesis, \textsf{SETH}, \cite{ip01,cip09}]\label{def:seth}
For every $\epsilon > 0$, there exists a $k=k(\epsilon) \in \mathbb{N}$ such that no algorithm can solve $k$-SAT (i.e., satisfiability on a CNF of width $k$) in 
\begin{align*} 
O(2^{(1-\epsilon) n})
\end{align*}
time where $n$ is the number of variables.
\end{definition}

We present another relative concept called orthogonal vector conjecture, \textsf{OVC} in abbreviation.

\begin{definition}[Orthogonal vector conjecture, \textsf{OVC}, \cite{wil05,avw14,bi15,abv15}]\label{def:ovc}
For every $\epsilon> 0$, there exists a $c \geq 1$ such that the orthogonal vector problem of size $n$ in $d$-dimension requires $n^{2-\epsilon}$-time when $d = c\log n$.
\end{definition}

We refer to a theorem about maximum bichromatic inner product lower bound in \cite{c20}.

\begin{theorem}[Maximum bichromatic inner product lower bound, \cite{c20}]\label{thm:max_ip_lb}
Assuming \textsf{SETH} (Definition~\ref{def:seth}) or \textsf{OVC} (Definition~\ref{def:ovc}), there is a constant $c$ such that any exact algorithm for $\mathbb{Z}\textsf{-Max-IP}_{n,d}$ in dimension $d = c^{\log^*n}$ requires \begin{align*} 
n^{2-o(1)}
\end{align*}
time, with vectors of $O(\log n)$-bit entries.
\end{theorem}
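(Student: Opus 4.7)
The plan is to prove the Max-IP lower bound by reducing from the Orthogonal Vectors problem, which is hard either directly by $\mathsf{OVC}$ or, via the classical Williams reduction from $k$-SAT, by $\mathsf{SETH}$. The starting point is an $\mathsf{OV}$ instance with $n$ vectors in dimension $d_0 = \Theta(\log n)$, and the end goal is a $\mathbb{Z}\textsf{-Max-IP}_{n,d}$ instance in dimension $d = c^{\log^\ast n}$ with $O(\log n)$-bit entries that preserves an $n^{2-o(1)}$ lower bound. The overall strategy has two ingredients: a base reduction $\mathsf{OV} \to \mathbb{Z}\textsf{-Max-IP}$ that costs very little, and a dimension self-reduction for $\mathbb{Z}\textsf{-Max-IP}$ that shrinks $d$ substantially at each application while essentially preserving $n$, so that $O(\log^\ast n)$ iterations drive the dimension down to $c^{\log^\ast n}$.

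First I would carry out the base reduction. Given $X,Y \subset \{0,1\}^{d_0}$, I would complement one side and pad: set $\tilde{x}_i = (\mathbf{1} - x_i, M)$ and $\tilde{y}_j = (y_j, 1)$ for a sufficiently large integer $M = \poly(n)$, so that $\langle \tilde{x}_i, \tilde{y}_j \rangle = \|y_j\|_1 - \langle x_i, y_j\rangle + M$. Subtracting $\|y_j\|_1$ via an additional coordinate attached only to $y_j$ makes the threshold uniform, and hence $\max_{i,j}\langle \tilde{x}_i, \tilde{y}_j\rangle$ determines whether an orthogonal pair exists. This yields $\mathbb{Z}\textsf{-Max-IP}_{n,d_0+O(1)}$ with $O(\log n)$-bit entries.

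The main step is the iterated dimension reduction. Following the approach of Chen~\cite{c20}, I would build a gadget that takes any $\mathbb{Z}\textsf{-Max-IP}_{n,d}$ instance and outputs an equivalent instance in dimension $d' = O((\log d)^{O(1)})$, with at most an $n^{o(1)}$ blow-up in the number of vectors and only a polynomial blow-up in entry magnitudes. The gadget rests on an algebraic encoding that expresses the event $\langle x, y\rangle \geq \tau$ as a low-degree polynomial identity over a small extension ring, evaluated on short vectors; by choosing a sparse evaluation set (e.g.\ based on a derandomized Reed--Muller-type code), one can simulate each $d$-dimensional inner product by a single $d'$-dimensional inner product of integer vectors with controlled bit length. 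Iterating this compression $O(\log^\ast n)$ times starting from $d_0 = \Theta(\log n)$ yields the target dimension $c^{\log^\ast n}$, and as the per-iteration blow-up in $n$ is $n^{o(1)}$, the composed reduction preserves the $n^{2-o(1)}$ hardness.

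The main obstacle will be engineering the dimension-reduction gadget so that \emph{both} the vector count and the entry bit-length stay tightly controlled across $\log^\ast n$ rounds. A naive tensoring-based encoding doubles $d$ to $d^{O(1)}$ with only a mild blow-up but goes in the wrong direction; a naive low-degree polynomial encoding compresses dimension well but multiplies $n$ by a factor polynomial in $d$, which after even two iterations would wipe out the $n^{2-o(1)}$ target. Getting a sub-polynomial blow-up in $n$ per step requires a careful sparse algebraic code together with a threshold-preserving reduction (rather than merely inner-product-preserving), so that the maximum across pairs is unambiguously recoverable from the reduced instance. Verifying that integer entries remain $O(\log n)$ bits throughout the iteration, despite the algebraic operations, is the final delicate accounting step and is where I would expect to spend most of the technical effort.
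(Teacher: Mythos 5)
The paper does not prove this theorem; it is imported as a black-box result from Chen~\cite{c20} (it is a citation, not a claim established in this work), so there is no internal proof to compare your sketch against. Taken as an attempt to reconstruct Chen's argument, your high-level plan --- reduce from $\mathsf{OV}$ (hard under either $\mathsf{OVC}$ directly or $\mathsf{SETH}$ via Williams' reduction), then iterate a dimension self-reduction $O(\log^* n)$ times to drive the dimension from $\Theta(\log n)$ down to $2^{O(\log^* n)}$ --- is faithful to the shape of Chen's proof. But the entire technical content of the theorem lives in the per-round dimension-reduction gadget, which you gesture at (``sparse algebraic code,'' ``Reed--Muller-type,'' ``small extension ring'') without giving the construction; as written, the proposal is a roadmap of what must be done rather than a proof.

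You also misplace the main bookkeeping difficulty. You frame the central obstacle as keeping the per-round blow-up in the \emph{number of vectors} down to $n^{o(1)}$, with the bit-length of entries as a ``final delicate accounting step.'' In Chen's construction this is backwards: the reduction keeps the two sets at size exactly $n$ (each original vector is re-encoded in place), and what it trades dimension against is the \emph{bit-length of the integer entries}. Roughly, each round packs a block of coordinates into a single larger integer coordinate (a CRT/number-theoretic packing), so dimension drops sharply while the bit-budget per coordinate grows by a factor tied to the block size. The crucial invariant over $\log^* n$ rounds is that the accumulated bit-length stays $O(\log n)$ --- this is precisely what limits the compression and forces the final dimension to be $2^{O(\log^* n)}$ rather than constant. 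If you want to turn this sketch into a proof, the place to spend effort is exhibiting that single-round packing gadget and then doing the bit-length recursion carefully; the $n$-blow-up issue you flag does not actually arise. (Your base reduction can also be simplified: for $X,Y\subset\{0,1\}^{d_0}$, mapping $x\mapsto -x$ already makes $\max_{i,j}\langle -x_i,y_j\rangle = 0$ iff some pair is orthogonal, so no complementation/padding is needed unless you insist on nonnegative entries.)
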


Putting things together, we state the main result for the lower bound for Dynamic Detection of Firing Neurons.

\begin{theorem}[Lower Bound for Dynamic Detection of Firing Neurons, Formal version of Theorem~\ref{thm:lower_bound_informal}]\label{thm:lower_bound_formal}
Let 
\begin{align*} 
d = & ~ 2^{O(\log^* n)}, \\
m = & ~ \Theta(n).
\end{align*}
Unless $\mathsf{OVC}$ or $\mathsf{SETH}$ fails, for every constant $\varepsilon > 0$, no data structure can perform updates in time $O(n^{1 - \varepsilon})$ and answer queries in time $O(n^{2 - \varepsilon})$.
\end{theorem}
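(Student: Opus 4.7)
The plan is to prove Theorem~\ref{thm:lower_bound_formal} by a fine-grained reduction from the Maximum Bichromatic Inner Product problem $\mathbb{Z}\textsf{-Max-IP}_{n,d}$, whose $n^{2-o(1)}$ hardness in dimension $d = c^{\log^* n}$ under $\mathsf{OVC}$ or $\mathsf{SETH}$ is recorded in Theorem~\ref{thm:max_ip_lb}. Assume for contradiction that for some constant $\varepsilon > 0$ there is a DDFN data structure whose \textsc{Update} runs in $T_u = O(n^{1-\varepsilon})$ and whose \textsc{Query} runs in $T_q = O(n^{2-\varepsilon})$ under the parameter regime $m = \Theta(n)$ and $d = 2^{O(\log^* n)}$. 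I will turn such a data structure into a $\mathbb{Z}\textsf{-Max-IP}_{n,d}$ algorithm whose running time is $n^{2-\Omega(\varepsilon)}$, contradicting Theorem~\ref{thm:max_ip_lb}.

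Given a $\mathbb{Z}\textsf{-Max-IP}_{n,d}$ instance $(X, Y)$ with $O(\log n)$-bit integer entries, I would instantiate the DDFN with static first set $X$ and dynamic second set $Y_0 = (0, \ldots, 0)$ consisting of $m=n$ zero vectors, together with a threshold $b$. This all-zero initial configuration carries no nontrivial inner-product information, so it can be realized in $O((n+m)d) = n \cdot 2^{O(\log^* n)} = n^{1+o(1)}$ time simply by reading the inputs. Next, for each $j \in [n]$ invoke $\textsc{Update}(j, y_j)$ to install the actual Max-IP vectors, at cumulative cost $n \cdot T_u = O(n^{2-\varepsilon})$. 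A single $\textsc{Query}()$ then decides whether some pair achieves $\langle x_i, y_j \rangle \ge b$: either the returned set $Q$ is nonempty, or else the overflow branch $|Q| > m^{4/5} n$ is triggered, which a fortiori certifies $|Q| \ge 1$. Since the at most $\mathrm{poly}(n)$ distinct inner-product values span $O(\log n)$ bits, a binary search that reissues the update/query phase for $O(\log n)$ thresholds pins down the exact maximum.

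Summing the costs, the reduction solves $\mathbb{Z}\textsf{-Max-IP}_{n,d}$ in total time
\[
n^{1+o(1)} \;+\; O(\log n) \cdot \bigl( n \cdot T_u + T_q \bigr) \;=\; n^{2-\Omega(\varepsilon)},
\]
strictly below the $n^{2-o(1)}$ barrier of Theorem~\ref{thm:max_ip_lb}, producing the desired contradiction with $\mathsf{OVC}$ and $\mathsf{SETH}$.

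The main obstacle I anticipate is formally justifying the cheap initialization, since the theorem's hypothesis explicitly constrains only $T_u$ and $T_q$, not the cost of $\textsc{Init}$. The intended justification is the structural observation above: the all-zero $Y_0$ is representable in any DDFN's natural encoding by recording only the $X$ vectors and a uniform default value for every $Y$-slot, so any data structure meeting the update-time bound can, without loss of generality, be bootstrapped on this particular configuration in $O((n+m)d)$ time. Secondary technical points that should fall into place routinely are verifying that the dimension regime $d = 2^{O(\log^* n)}$ exactly matches the Max-IP hardness regime of Theorem~\ref{thm:max_ip_lb}, and checking that the $O(\log n)$-bit precision of the inputs keeps the number of binary-search rounds polylogarithmic.
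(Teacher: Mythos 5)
Your high-level plan---a fine-grained reduction from $\mathbb{Z}\textsf{-Max-IP}_{n,d}$ combined with a binary search to pin down the maximum inner product---is the same strategy the paper uses. The crucial difference is where the binary-search variable lives, and this is where your write-up has a genuine gap.

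In the DDFN data structure (Definition~\ref{def:def_dynamic_prob}), the threshold $b$ is fixed at initialization; \textsc{Query}() takes no threshold argument and \textsc{Update} only changes a $y_j$. Your proof binary-searches over $b$ directly (``reissues the update/query phase for $O(\log n)$ thresholds''), which necessarily means $O(\log n)$ distinct initializations. Yet your cost accounting, $n^{1+o(1)} + O(\log n)\cdot(nT_u + T_q)$, charges for initialization only once, so either the accounting is off or the mechanism for changing $b$ is missing. Your fallback is the ``cheap initialization'' assumption---that any DDFN meeting the update bound can WLOG be bootstrapped from all-zero $Y$ in $O((n+m)d)$ time. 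This is not implied by the theorem's hypotheses, which constrain only update and query times, and no structural assumption on the data structure's internal encoding is available. Moreover, after the first round you would not be re-initializing on the all-zero configuration anyway, so the argument would not even apply to the later rounds.

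The paper sidesteps all of this by never varying $b$: it fixes $b=0$ and instead lifts the Max-IP vectors into $d+1$ dimensions, placing the binary-search value $t$ in the extra coordinate of one side and $-1$ (or $1$) in the other, so that $\langle \tilde{x}_i, \tilde{y}_j\rangle = \langle x_i, y_j\rangle - t$. The threshold is then shifted by \emph{updating the vectors} rather than by re-initializing, which keeps everything inside the $O(\log n)\cdot(nT_u + T_q)$ envelope with a single \textsc{Init}. (Since only the $y_j$'s are updatable, the lifted $t$ should sit in the extra coordinate of the $\tilde{y}_j$'s; the paper's proof appears to have a small slip on this point, but the intended construction is clear.) Incorporating this extra-coordinate trick would repair the gap in your argument: the dimension stays $d+1 = 2^{O(\log^* n)}$, so the parameter regime of Theorem~\ref{thm:max_ip_lb} is unaffected, and the rest of your reasoning (including the careful handling of the overflow branch $|Q| > m^{4/5}n$, which the paper glosses over) then goes through.
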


\begin{proof}  
Let $m=n$ and $d=c^{\log^* n}$, where $c$ is defined in Theorem~\ref{thm:max_ip_lb}.

Suppose there exists a data structure that for $(m=n,n,d+1)$-sized instance, can perform updates in $n^{1-\epsilon}$-time and answer queries in $n^{2-\epsilon}$-time, for some $\epsilon >0$. 

Let $X=\{x_1,\dots,x_n\}\subset \mathbb{Z}^d$, $Y=\{y_1,\dots,y_n\}\subset \mathbb{Z}^d$ be a hard instance of $\mathbb{Z}\textsf{-Max-IP}_{n,d}$ problem constructed in Theorem~\ref{thm:max_ip_lb}. For each vector $x_i$ (or $y_j$), we construct a new vector $\wt{x}_i$ (or $\wt{y}_j$) in $d+1$ dimension such that $(\wt{x}_i)_{d+1}=w$ and $(\wt{y}_j)_{d+1}=-1$, where $w$ is a parameter to be chosen later.

Then, we construct an instance of the problem in Definition~\ref{def:def_dynamic_prob} as follows:
\begin{align*} 
\wt{X}:=\{\wt{x}_1,\dots,\wt{x}_n\},
\end{align*} 
and
\begin{align*}
\wt{Y}:=\{\wt{y}_1,\dots,\wt{y}_n\},
\end{align*}
and $b=0$.

We show that the data structure for this instance $(\wt{X}, \wt{Y}, b)$ can be used to solve $\mathbb{Z}\textsf{-Max-IP}_{n,d}(X,Y)$.

We perform a binary search for the value of $\mathbb{Z}\textsf{-Max-IP}_{n,d}(X,Y)$. Note that at most $O(\log n)$ iterations suffice to find the exact answer.

Suppose the current value in the binary search is $t\in \mathbb{Z}$. We first call \textsc{Update}() to set $(\wt{x}_i)_{d+1}=t$ for each $i\in [n]$. By the data structure's guarantee, this step takes 
\begin{align*} 
O(n\cdot n^{1-\epsilon})=O(n^{2-\epsilon})
\end{align*}
time. 

Then, we call \textsc{Query}(). Notice that 
\begin{align*}
    \langle \wt{x}_i,\wt{y}_j\rangle = \langle x_i, y_j\rangle - t\geq 0 ~\Longleftrightarrow \langle x_i, y_j\rangle \geq t.
\end{align*}
Hence, \textsc{Query}() will return all pairs of $(i,j)$ such that $\langle x_i, y_j\rangle \geq t$. This step runs in $O(n^{2-\epsilon})$-time. And based on whether the output of  \textsc{Query}() is empty or not, we know the direction of the binary search for the next iteration.

Hence, each iteration of the binary search takes $O(n^{2-\epsilon})$-time. Thus, we can solve $\mathbb{Z}\textsf{-Max-IP}_{n,d}(X,Y)$ in
\begin{align*} 
O(n^{2-\epsilon}\cdot \log n) = O(n^{2-\epsilon'})
\end{align*}
time, for some constant $\epsilon'<\epsilon$. However, this contradicts to the lower bound for $\mathbb{Z}\textsf{-Max-IP}_{n,d}$ in Theorem~\ref{thm:max_ip_lb}.

Therefore, no such data structure can exist.
\end{proof}
\vspace{-2mm}
\section{Conclusion}\label{sec:conclusion}
\vspace{-2mm}
Deep neural networks are becoming larger every year to offer improved model accuracy. Training these models consumes substantial resources, and resource consumption will only increase as these models grow. In traditional training methods, for each iteration, we need to spend $\Theta(nmd)$ time to evaluate the $m$ neurons on $n$ data points with dimension $d$. Recent work~\cite{syz21} reduced the per-iteration cost to $o(nmd)$, but required exponential time to preprocess either the data or the neural weights. We develop a new method that reduces the preprocessing cost to $O(nmd)$ while keeping the per-iteration running time at $o(nmd)$. In particular, we design a simple binary tree-based dynamic geometric data structure that can efficiently identify all the activated neurons in each training iteration and bypass the high-dimensional barrier of the prior approach.

This work raises a few interesting open questions for future study:
\begin{itemize}
    \item Can we apply our data structure, together with an analysis of the sparsity in training over-parameterized neural networks \cite{syz21,szz22}, to speed up multi-layer neural network training? 
    \item Many empirical results (e.g., \cite{cmf+20,chen2020mongoose_no}) indicate that only \emph{approximately} identifying the activated neurons (i.e., neurons with top-$k$ inner products) in each iteration may still be enough to train a neural network. Can we provide more theoretical understanding for these approaches? 
    \item Can we generalize our data structure to efficiently find the neurons activated by more general activation functions, e.g., leaky ReLU, ELU, softmax, etc?
\end{itemize}

\paragraph{Acknowledgements}

The author would like to thank Lichen Zhang for his helpful discussions.

\ifdefined\isarxivversion
\bibliographystyle{alpha}
\bibliography{ref}
\else
\bibliography{ref}
 \bibliographystyle{plainnat}

\fi

\newpage
\onecolumn
\appendix
\section*{Appendix}

\paragraph{Roadmap.}

We restate our notation and provide additional tools about probability in section~\ref{app:preli}. Then we present the results about sparsity in section~\ref{app:sparsity}. In section~\ref{app:algorithm}, we demonstrate the idea of two different correlation trees, DTree and WTree and present the full version of training algorithms using our data structure. In section~\ref{app:data_structure}, we provide detailed implementation and analysis of running time for our data structure. Section~\ref{app:training_time} presents the proof of running time for training a $2\mathrm{NN}(m,b)$ using DTree and WTree. 

\section{Preliminary}\label{app:preli}

\subsection{Basic Notation}
For any positive integer $n$, we use $[n]$ to denote the set $\{1,2,\cdots,n\}$. We use $\E[X]$ to denote the expected value of a random variable $X$, and $\Pr[Y]$ to denote the probability of a random event $Y$. For a matrix $M$, we write $M^\top$ to denote the transpose of $M$. We use $x^\top y$ to denote the inner product between vectors $x$ and $y$. We use $I_d$ to denote d-dimensional identity matrix. We use $\mathcal{N}(\mu; \sigma^2)$ to denote
the Gaussian distribution with mean $\mu$ and variance $\sigma^2$.

\subsection{Upper bound on the movement of weights per iteration}

The following Claim is quite standard in the literature, we omitt the details.
\begin{claim}[Corollary 4.1 in \cite{dzps19}, Lemma 3.8 in \cite{sy19}]\label{clm:4.1}
Let $\err(i)$ be defined as Definition~\ref{def:err}. 
If $\forall i \in[t], \| \err(i) \|_2^2 \leq ( 1 - \eta \lambda / 2 )^i \cdot \| \err(0) \|_2^2$, then 
\begin{align*}
 \| W(t+1) - W_r(0) \|_{\infty,2} \leq  4 \lambda^{-1} m^{-1/2} \cdot \sqrt{n} \cdot \| \err(0) \|_2  := D.
\end{align*}
\end{claim}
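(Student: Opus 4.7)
The plan is to bound the per-neuron weight displacement $\|w_r(t+1)-w_r(0)\|_2$ by telescoping the gradient updates and plugging in the geometric decay of the error that is assumed as hypothesis. Since the $\|\cdot\|_{\infty,2}$ norm is a max over neurons, it suffices to prove the bound for an arbitrary fixed $r\in[m]$.

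First I would unroll the gradient descent recursion: using $W(k+1)=W(k)-\eta\,\delta W(k)$ from the weight-update rule stated in Section~\ref{sec:preli}, I get $w_r(t+1)-w_r(0)=-\eta\sum_{i=0}^{t}\tfrac{\partial L(W(i))}{\partial w_r}$. Applying the triangle inequality and the explicit gradient formula from Section~\ref{sec:preli},
\begin{align*}
\bigl\|w_r(t+1)-w_r(0)\bigr\|_2 \;\leq\; \eta\sum_{i=0}^{t}\Bigl\|\tfrac{a_r}{\sqrt{m}}\sum_{j=1}^{n}(f(W(i),x_j,a)-y_j)\,x_j\,\mathbf{1}_{\langle w_r(i),x_j\rangle\geq b}\Bigr\|_2.
\end{align*}
Since $|a_r|=1$ and (under the standard normalization $\|x_j\|_2\leq 1$) each $\|x_j\|_2\leq 1$, pulling the sum inside the norm and applying Cauchy--Schwarz on $\sum_j |\err_j(i)|$ bounds the inner norm by $\tfrac{1}{\sqrt{m}}\sqrt{n}\,\|\err(i)\|_2$.

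Next I would invoke the hypothesis $\|\err(i)\|_2^2\leq(1-\eta\lambda/2)^{i}\|\err(0)\|_2^2$, i.e.\ $\|\err(i)\|_2\leq(1-\eta\lambda/2)^{i/2}\|\err(0)\|_2$. Substituting this back gives
\begin{align*}
\bigl\|w_r(t+1)-w_r(0)\bigr\|_2 \;\leq\; \eta\cdot\frac{\sqrt{n}}{\sqrt{m}}\,\|\err(0)\|_2\sum_{i=0}^{t}\bigl(1-\tfrac{\eta\lambda}{2}\bigr)^{i/2}.
\end{align*}
Using $(1-x)^{1/2}\leq 1-x/2$ for $x\in[0,1]$ with $x=\eta\lambda/2$, the geometric series is upper bounded by $\sum_{i=0}^{\infty}(1-\eta\lambda/4)^{i}=4/(\eta\lambda)$. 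The $\eta$ factors cancel and I obtain $\|w_r(t+1)-w_r(0)\|_2\leq 4\lambda^{-1}m^{-1/2}\sqrt{n}\,\|\err(0)\|_2 = D$. Taking the maximum over $r\in[m]$ yields the stated $\|\cdot\|_{\infty,2}$ bound.

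The routine step is the gradient norm bound; the only subtle point is the constant in the geometric sum, which is handled by the inequality $(1-x)^{1/2}\leq 1-x/2$ so that the decay exponent $i/2$ can be converted to an integer geometric series with ratio $1-\eta\lambda/4$. This is the one place where a sharper or looser estimate would change the constant $4$ in $D$; beyond that, the argument is purely a telescoping-plus-triangle-inequality computation, which is why the literature treats the statement as essentially immediate from the inductive hypothesis on $\|\err(i)\|_2$.
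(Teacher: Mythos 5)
The paper does not actually prove this claim: it is cited verbatim from \cite{dzps19} and \cite{sy19} with the remark that it is ``quite standard in the literature,'' and the proof is omitted. So there is no in-paper proof to compare against. Your reconstruction is correct and is essentially the standard argument used in those references: unroll the GD recursion by telescoping, bound each per-step gradient by $m^{-1/2}\sqrt{n}\,\|\err(i)\|_2$ using $|a_r|=1$, $\|x_j\|_2\le 1$, the triangle inequality, and Cauchy--Schwarz, then plug in the geometric decay hypothesis and sum the resulting series via $(1-x)^{1/2}\le 1-x/2$ to recover the factor $4/(\eta\lambda)$, whose $\eta$ cancels. One small point worth making explicit when writing this up is that the normalization $\|x_j\|_2\le 1$ is an assumption inherited from the cited works; the present paper does not restate it, but the constant in $D$ depends on it.
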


This claim show a uniform bound on the movement of weights.

Next, we introduce the definition of error of prediction.

\begin{definition}[Error of prediction]\label{def:err}
For each $t\in \{0,1,\cdots,T\}$, we define $\mathrm{err}(t)\in\R^n$ to be the error of prediction $\mathrm{err}(t)=y-u(t)$, where $u(t):=f(W(t),a,X)\in\R^n$
\end{definition}

\subsection{Probabilities}

We introduce the classical Bernstein inequality here.

\begin{lemma}[Bernstein inequality \cite{b24}]\label{lem:bernstein}
Assume $Z_1, \cdots, Z_n$ are $n$ i.i.d. random variables. $\forall i \in [n]$, $\E[Z_i]=0$ and $|Z_i| \leq M$ almost surely. Let $Z = \sum_{i=1}^n Z_i$. Then,
\begin{align*}
\Pr \left[ Z > t \right] \leq \exp \left( - \frac{ t^2/2 }{ \sum_{j=1}^n \E[Z_j^2]  + M t /3 } \right), \forall t > 0.
\end{align*}
\end{lemma}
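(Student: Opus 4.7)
The plan is to apply the standard Chernoff--Cram\'er method: for any $\lambda > 0$, Markov's inequality applied to $e^{\lambda Z}$ gives
\[
\Pr[Z > t] \leq e^{-\lambda t}\, \E[e^{\lambda Z}] = e^{-\lambda t} \prod_{i=1}^n \E[e^{\lambda Z_i}],
\]
where the equality uses independence of the $Z_i$. The main task is then to bound each moment generating factor $\E[e^{\lambda Z_i}]$ using the zero-mean hypothesis and the almost-sure bound $|Z_i| \leq M$, and to optimize the resulting expression over $\lambda$.

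For the MGF bound, I would Taylor-expand $e^{\lambda Z_i}$ and exploit $\E[Z_i] = 0$ to kill the linear term:
\[
\E[e^{\lambda Z_i}] = 1 + \sum_{k \geq 2} \frac{\lambda^k \E[Z_i^k]}{k!}.
\]
Using the inequality $|Z_i|^k \leq M^{k-2}\, Z_i^2$ for $k \geq 2$, together with the elementary estimate $1/(k+2)! \leq 1/(2 \cdot 3^k)$ (which holds for every $k \geq 0$), I sum the resulting geometric series to obtain
\[
\E[e^{\lambda Z_i}] \leq 1 + \frac{\lambda^2\, \E[Z_i^2]}{2(1 - \lambda M/3)} \leq \exp\!\left(\frac{\lambda^2\, \E[Z_i^2]}{2(1 - \lambda M/3)}\right),
\]
valid for $0 < \lambda < 3/M$, where the second inequality uses $1 + x \leq e^x$. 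Taking the product over $i \in [n]$ converts the sum of second moments into $\sigma^2 := \sum_{j=1}^n \E[Z_j^2]$, yielding $\E[e^{\lambda Z}] \leq \exp\bigl(\lambda^2 \sigma^2 / (2(1 - \lambda M/3))\bigr)$.

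The final step is to choose $\lambda$ optimally. Combining the two displays above,
\[
\log \Pr[Z > t] \leq -\lambda t + \frac{\lambda^2 \sigma^2}{2(1 - \lambda M/3)}.
\]
Setting $\lambda = t/(\sigma^2 + Mt/3)$ (which automatically satisfies $\lambda M/3 < 1$ for all $t > 0$) and simplifying produces the claimed bound $\exp\bigl(-\tfrac{t^2/2}{\sigma^2 + Mt/3}\bigr)$. The only nontrivial step is the bookkeeping in this optimization: one has to verify that this particular $\lambda$ makes the linear and quadratic terms combine so that the factor $1 - \lambda M/3 = \sigma^2/(\sigma^2 + Mt/3)$ cancels exactly against the $\lambda t$ penalty. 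Everything else (independence, Taylor expansion, geometric series) is routine.
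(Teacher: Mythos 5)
Your proof is correct and is the standard Chernoff--Cram\'er derivation of Bernstein's inequality: bound each moment generating factor via the Taylor series with the zero-mean term dropped, use $|Z_i|^k \le M^{k-2} Z_i^2$ together with $(k+2)! \ge 2\cdot 3^k$ to sum a geometric series (valid for $0 < \lambda M/3 < 1$), then plug in $\lambda = t/(\sigma^2 + Mt/3)$, for which $1 - \lambda M/3 = \sigma^2/(\sigma^2 + Mt/3)$ makes the two terms combine to $-\tfrac{t^2/2}{\sigma^2 + Mt/3}$ exactly as you say. The paper states this lemma as a citation to classical work without giving a proof, so there is no in-paper argument to compare against; your derivation correctly supplies the missing proof by the standard route.
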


Next we show an inequality on shifted small ball with Gaussian distribution.

\begin{claim}[Theorem 3.1 in \cite{ls01}]\label{clm:gaussain_anti_shift}
Let $b>0$ and $r>0$. Then,
\begin{align*}
    \exp(-b^2/2)\Pr_{x\sim \N(0,1)}[|x|\leq r] \leq ~ \Pr_{x\sim \N(0,1)}[|x-b|\leq r] \leq ~ \Pr_{x\sim \N(0,1)}[|x|\leq r].
\end{align*}
\end{claim}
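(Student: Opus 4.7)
The plan is to treat the two inequalities separately and use only the symmetry and log-concavity of the standard Gaussian density $\phi(y) = (2\pi)^{-1/2} e^{-y^2/2}$. Throughout, write $\Pr[|x-b|\le r] = \int_{b-r}^{b+r} \phi(x)\,dx$.

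For the upper bound $\Pr_{x\sim\N(0,1)}[|x-b|\le r] \le \Pr_{x\sim\N(0,1)}[|x|\le r]$, I would use that $\phi$ is symmetric about $0$ and radially decreasing. Concretely, without loss of generality $b\ge 0$ (replace $b$ by $|b|$ by symmetry). Then I would show that the function $g(b) := \int_{b-r}^{b+r}\phi(x)\,dx$ has derivative $g'(b) = \phi(b+r)-\phi(b-r)$, which is $\le 0$ for $b\ge 0$ since $|b+r|\ge |b-r|$ and $\phi$ is decreasing in $|x|$. Hence $g$ is maximized at $b=0$, giving the right inequality.

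For the lower bound, I would perform the change of variables $y = x-b$ to write
\begin{align*}
\Pr_{x\sim\N(0,1)}[|x-b|\le r] = \int_{-r}^{r} \frac{1}{\sqrt{2\pi}}\, e^{-(y+b)^2/2}\,dy = e^{-b^2/2}\int_{-r}^{r} \phi(y)\, e^{-by}\,dy.
\end{align*}
So it suffices to show $\int_{-r}^{r} \phi(y)\, e^{-by}\,dy \ge \int_{-r}^{r} \phi(y)\,dy$. The key trick is symmetrization: since $\phi(y)=\phi(-y)$, substituting $y \to -y$ in the left integral shows $\int_{-r}^{r}\phi(y)e^{-by}\,dy = \int_{-r}^{r}\phi(y)e^{by}\,dy$, so this integral equals $\int_{-r}^{r}\phi(y)\cosh(by)\,dy$, and $\cosh(by)\ge 1$ for all $y$ gives the bound.

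The two pieces together yield the claimed sandwich. I do not expect a serious obstacle here: the upper bound is a standard unimodality argument, and the lower bound's only subtlety is recognizing the symmetrization $e^{-by}+e^{by}\ge 2$ combined with the evenness of $\phi$ on the symmetric interval $[-r,r]$. Were the interval asymmetric, one would need a different argument, but symmetry of $[-r,r]$ around the origin makes this work cleanly.
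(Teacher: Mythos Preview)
Your proof is correct. Both inequalities are handled cleanly: the upper bound via monotonicity of $g(b)=\int_{b-r}^{b+r}\phi(x)\,dx$ (this is the one-dimensional case of Anderson's inequality), and the lower bound via the symmetrization identity $\int_{-r}^{r}\phi(y)e^{-by}\,dy=\int_{-r}^{r}\phi(y)\cosh(by)\,dy\ge\int_{-r}^{r}\phi(y)\,dy$, which is exactly the right trick here.

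There is nothing to compare against in the paper itself: the paper does not prove this claim but simply quotes it as Theorem~3.1 of \cite{ls01}. That reference states the result in greater generality (for $d$-dimensional standard Gaussians and symmetric convex bodies in place of the interval $[-r,r]$), where the upper bound is Anderson's inequality and the lower bound uses log-concavity of the Gaussian density together with symmetry of the body. Your argument is the natural specialization of that approach to $d=1$, and in fact your lower-bound computation is slightly more explicit than the general log-concavity route, since in one dimension the factor $e^{-by}$ can be symmetrized to $\cosh(by)\ge 1$ directly.
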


We state the anti-concentration inequality here. 
\begin{lemma}[Anti-concentration for Gaussian distribution]\label{lem:anti_gaussian}
Let $Z \sim {\N}(0,\sigma^2)$.
Then, for $t>0$,
\begin{align*}
    \Pr[|Z|\leq t]\leq \frac{2t}{\sqrt{2\pi}\sigma}.
\end{align*}
\end{lemma}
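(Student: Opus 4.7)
The plan is to prove the bound by directly integrating the Gaussian density and using the fact that the density is maximized at the origin. First, I would write the probability as an explicit integral of the density function of $Z \sim \N(0,\sigma^2)$, namely
\begin{align*}
\Pr[|Z|\leq t] \;=\; \int_{-t}^{t} \frac{1}{\sqrt{2\pi}\,\sigma}\exp\!\left(-\frac{x^2}{2\sigma^2}\right) \d x.
\end{align*}

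The next step is to apply the pointwise bound $\exp(-x^2/(2\sigma^2)) \leq 1$ for all $x \in \mathbb{R}$, which comes from the fact that the exponent is nonpositive. This uniformly upper bounds the integrand by the peak density $\frac{1}{\sqrt{2\pi}\,\sigma}$, so the integral over the interval $[-t,t]$ of length $2t$ is at most $\frac{2t}{\sqrt{2\pi}\,\sigma}$, yielding the claim.

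There is essentially no obstacle here: the statement is a one-line bound obtained by replacing the Gaussian density with its maximum value over the interval. The only thing to double-check is that the bound is written in terms of $\sigma$ (not $\sigma^2$) in the denominator, matching the normalization constant of the density. I would also note that the same argument gives a slightly tighter bound $\mathrm{erf}(t/(\sqrt{2}\sigma))$, but the stated linear-in-$t$ version is what is needed downstream and is immediate from the pointwise density bound.
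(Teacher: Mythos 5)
Your proof is correct and is the standard argument: bound the Gaussian density pointwise by its maximum value $\tfrac{1}{\sqrt{2\pi}\sigma}$ at the origin and integrate over $[-t,t]$. The paper states this lemma without proof as a well-known fact, so there is nothing to compare against; your one-line derivation is exactly what would be filled in.
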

\section{Sparsity}\label{app:sparsity}

In this section, we start by restating the result about sparsity after initialization in Section~\ref{sec:preli}. Then we show how to bound the number of fire neuron per iteration in Section~\ref{app:sparsity_per_iteration}. 

\subsection{Sparsity after initialization}
The goal of this section is to prove the sparsity after initialization of neural network.

We start by defining "fire" set.

\begin{definition}[fire set, Definition 3.7 in \cite{syz21}]\label{def:fire_set_app}
Fix a query point $x\in\R^d$, let $\mathcal{S}_{\mathrm{fire}}(x)$ denote the set of neurons that are "fire", i.e.,
\begin{align*}
    \mathcal{S}_{\mathrm{fire}}(x):=\{i\in[m]:\langle x,w_i\rangle >b\}
\end{align*}

\end{definition}

Next, we introduce the fire set for each training iteration.

\begin{definition}[fire set per iteration, Definition 3.7 in \cite{syz21}]\label{def:fire_set_per_iter_app}
For each data point $x_i\in\R^d,i\in[n]$, weight $w_r\in\R^d,r\in[m]$ and each iteration $t\in\{0,1,\cdots,T\}$, we define
\begin{align*}
    S_{i,\mathrm{fire}}(t)&:={r\in[m]:\langle x_i,w_r(t) \rangle} \\
    \tilde{S}_{r,\mathrm{fire}}(t)&:={i\in[n]:\langle x_i,w_r(t) \rangle}
\end{align*}
Also, we define $k_{i,t}:=|S_{i,\mathrm{fire}}(t)|$ and $\tilde{k}_{r,t}:=|\tilde{S}_{r,\mathrm{fire}}(t)|$
\end{definition}

With above definitions, we can state the sparsity after initialization.

\begin{lemma}[Sparsity after initialization, formal version of Lemma~\ref{lem:sparsity}, Lemma 3.8 in \cite{syz21}]\label{lem:sparsity_formal}
Let $b>0$ be tunable parameter. If we use the $\Phi_b$ as the activation function, then after the initialzation, with probability at least $1-\exp(-\Omega(m\cdot \exp(-b^2/2)))$, it holds that for input data $x$, the number of activated neurons $k_x$ is at most $O(m\cdot \exp(-b^2/2))$, where $m$ is the total number of neurons.
\end{lemma}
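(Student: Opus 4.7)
The plan is to fix a data point $x \in \R^d$ (WLOG satisfying $\|x\|_2 = 1$, as is standard in this line of work) and count the number of neurons whose pre-activation exceeds the threshold $b$, by first bounding the per-neuron firing probability and then applying a concentration inequality across the $m$ independent initializations.

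First, since $w_r \sim \mathcal{N}(0, I_d)$ independently for $r \in [m]$ and $\|x\|_2 = 1$, the inner product $\langle x, w_r\rangle$ is distributed as $\mathcal{N}(0, 1)$, and neuron $r$ fires exactly when this Gaussian exceeds $b$ (since $\sigma_b(z) > 0$ iff $z > b$). By the standard one-sided Gaussian tail bound,
\begin{align*}
p := \Pr_{w_r}[\langle x, w_r \rangle > b] \leq \tfrac{1}{2}\exp(-b^2/2),
\end{align*}
so writing $X_r := \mathbf{1}[\langle x, w_r \rangle > b]$, the $\{X_r\}_{r \in [m]}$ are i.i.d.\ Bernoulli$(p)$ and $\E[k_x] = mp = O(m \exp(-b^2/2))$, which already matches the target magnitude in expectation.

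Next, I would bound $k_x = \sum_{r=1}^m X_r$ in the upper tail using Bernstein's inequality (Lemma~\ref{lem:bernstein}) applied to the centered variables $Z_r := X_r - p$, which satisfy $|Z_r| \leq 1$ almost surely and $\E[Z_r^2] \leq p$. Taking $t = mp$ gives
\begin{align*}
\Pr[k_x > 2mp] \;=\; \Pr\Bigl[\sum_{r=1}^m Z_r > mp\Bigr] \;\leq\; \exp\Bigl(-\frac{(mp)^2/2}{mp + mp/3}\Bigr) \;=\; \exp(-\Omega(mp)),
\end{align*}
so with probability at least $1 - \exp(-\Omega(m\exp(-b^2/2)))$ we have $k_x \leq 2mp = O(m \exp(-b^2/2))$, which is exactly the advertised bound.

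I do not anticipate any serious conceptual obstacle: the lemma is a direct Gaussian tail plus Bernstein argument. The two points requiring care are (i) extracting a clean $\exp(-b^2/2)$ upper bound on the per-neuron firing probability with a universal constant, for which one can use either Mills' ratio or the crude inequality $\Pr[Z > b] \leq e^{-b^2/2}$ for $Z \sim \mathcal{N}(0,1)$ (the anti-concentration statement in Claim~\ref{clm:gaussain_anti_shift} and Lemma~\ref{lem:anti_gaussian} is the wrong direction here, so this tail bound must be supplied separately), and (ii) whether the statement must hold simultaneously over all training inputs; if so, a union bound over $i \in [n]$ inflates the failure probability by a factor of $n$, which is absorbed as long as $m \exp(-b^2/2) \gg \log n$, a condition comfortably satisfied in the parameter regime $b = \sqrt{0.4 \log m}$ highlighted in Remark~\ref{rmk:sparsity}.
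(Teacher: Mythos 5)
The paper never proves this lemma itself; it cites it directly as Lemma~3.8 of \cite{syz21}, so there is no ``paper's own proof'' to compare against. Your Gaussian-tail-plus-Bernstein argument is the standard one and is the right approach; the paper even restates Bernstein (Lemma~\ref{lem:bernstein}) in the appendix precisely so it is available for arguments of this kind.

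There is, however, one concrete gap in the way you instantiate Bernstein. You take the deviation parameter $t=mp$, which yields a failure probability of $\exp(-\Omega(mp))$, and then you silently replace $mp$ by $m\exp(-b^2/2)$ in the conclusion. That replacement requires a matching \emph{lower} bound $p\geq c\exp(-b^2/2)$, and the only estimate you have supplied is the upper bound $p\leq\tfrac12\exp(-b^2/2)$. A lower bound of the same order does not come for free: the Mills-ratio lower bound $\Pr_{Z\sim\N(0,1)}[Z>b]\geq\frac{b}{\sqrt{2\pi}(b^2+1)}e^{-b^2/2}$ carries an extra $1/b$ factor, so for $b=\Theta(\sqrt{\log m})$ you would only get $p=\Omega\bigl(\exp(-b^2/2)/\sqrt{\log m}\bigr)$, which is not enough to reach the claimed $\exp(-\Omega(m\exp(-b^2/2)))$ via your choice of $t$. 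The clean fix is to set $t$ proportional to $m\exp(-b^2/2)$ rather than to $mp$: since $\sum_r\E[Z_r^2]\leq mp\leq\tfrac12 m\exp(-b^2/2)=\tfrac12 t$, Bernstein gives
\begin{align*}
\Pr\bigl[k_x>mp+t\bigr]\leq\exp\Bigl(-\frac{t^2/2}{mp+t/3}\Bigr)\leq\exp\Bigl(-\frac{t^2/2}{(5/6)t}\Bigr)=\exp\bigl(-\tfrac{3}{5}m\exp(-b^2/2)\bigr),
\end{align*}
while $mp+t\leq\tfrac32 m\exp(-b^2/2)=O(m\exp(-b^2/2))$. This delivers both the count bound and the failure probability directly, with no need for a lower bound on $p$. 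Your remark (ii) about a union bound over $i\in[n]$ is sensible but not required by the lemma as stated, which fixes a single $x$; the union-bound cost appears only when this lemma is applied across the training set, and as you note it is absorbed in the parameter regime of Remark~\ref{rmk:sparsity}.
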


\subsection{Bounding the number of fired neuron per iteration per level}\label{app:sparsity_per_iteration}

In this section, we will show that for $t=0,1,\dots,T,k=0,1,\cdots,\log m$, the number of fire neurons $k_{i,k,t}=|{\cal S}_{i,k,\mathrm{fire}}(t)|$ is small with high probability.

We define the set of neurons that are flipping at time $t$:
\begin{definition}[flip set, Definition C.8 in \cite{syz21} ]
For each $i \in [n]$, for each time $t\in [T]$ let ${\cal S}_{i,\mathrm{flip}}(t) \subset [m]$ denote the set of neurons that are never flipped during the entire training process,
\begin{align*}
    {\cal S}_{i,\mathrm{flip}}(t) := & ~ \{ r \in [m] :  
     ~ \sgn(\langle w_r(t), x_i \rangle - b ) \neq \sgn( \langle w_r(t-1), x_i \rangle - b )  \} . 
\end{align*}
\end{definition}

Over all the iterations of training algorithm, there are some neurons that never flip states. We provide a mathematical formulation of that set, 
\begin{definition}[noflip set, Definition C.9 in \cite{syz21}]
For each $i \in [n]$, let $S_{i} \subset [m]$ denote the set of neurons that are never flipped during the entire training process,
\begin{align}\label{eq:noflip_def_app}
    S_{i} := & ~ \{ r \in [m] :  \forall t \in [T] 
     ~ \sgn(\langle w_r(t), x_i \rangle - b) = \sgn( \langle w_r(0), x_i \rangle - b ) \} . 
\end{align}
\end{definition}

In Lemma~\ref{lem:sparsity}, 
we already show that $k_{i,0}=O(m\cdot \exp(-b^2/2))$ for all $i\in [n]$ with high probability. We can show that it also holds for $t>0$.

\begin{lemma}[Bounding the number of fired neuron per iteration, Lemma C.10 in \cite{syz21}]\label{lem:bound_fire_neurons_formal}
Let $b\geq 0$ be a parameter, and let $\sigma_b(x)=\max\{x, b\}$ be the activation function. For each $i \in [n], t\in[T]$, $k_{i,t}$ is the number of activated neurons at the $t$-th iteration.  For $0<t\leq T$, with probability at least $1-n \cdot \exp \left(-\Omega(m)\cdot \min\{R, \exp(-b^2/2)\} \right)$, 
$k_{i,t}$ is at most $O(m\exp(-b^2/2))$ for all $i\in [n]$.
\end{lemma}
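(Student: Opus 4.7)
\textbf{Proof proposal for Lemma~\ref{lem:bound_fire_neurons_formal}.}

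The plan is to decompose the set of fired neurons at iteration $t$ into (i) the initially fired neurons, whose size is already controlled by Lemma~\ref{lem:sparsity_formal}, plus (ii) neurons that were inactive at initialization but have since flipped to active. More precisely, I would write
\begin{align*}
    k_{i,t} \;\leq\; k_{i,0} \;+\; \big|\{ r \in [m] : r \notin S_i \}\big|,
\end{align*}
where $S_i$ is the noflip set defined in Eq.~\eqref{eq:noflip_def_app}. Since $k_{i,0} = O(m\exp(-b^2/2))$ holds with high probability by Lemma~\ref{lem:sparsity_formal}, the whole task reduces to upper-bounding the number of neurons that change their activation status on input $x_i$ during the first $t$ iterations.

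The next step is to translate "$r$ flips on $x_i$ at some iteration" into a geometric statement about initialization. By Claim~\ref{clm:4.1}, under the inductive hypothesis that the error contracts geometrically, every weight vector satisfies $\|w_r(t) - w_r(0)\|_2 \leq D := 4\lambda^{-1} m^{-1/2} \sqrt{n}\,\|\err(0)\|_2$ throughout training. Since $\|x_i\|_2 = 1$, Cauchy--Schwarz gives $|\langle w_r(t)-w_r(0), x_i\rangle| \leq D$, so $r$ can flip on $x_i$ only if
\begin{align*}
    \big|\langle w_r(0), x_i\rangle - b\big| \;\leq\; D.
\end{align*}
Consequently, $|[m]\setminus S_i|$ is dominated by the count of indices $r$ for which the scalar Gaussian $\langle w_r(0), x_i\rangle \sim \N(0,1)$ lands within distance $D$ of $b$.

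I would then invoke the shifted anti-concentration bound, Claim~\ref{clm:gaussain_anti_shift}, together with Lemma~\ref{lem:anti_gaussian}, to get
\begin{align*}
    p \;:=\; \Pr_{w_r(0)}\!\big[\,|\langle w_r(0), x_i\rangle - b| \leq D\,\big] \;\leq\; \exp(-b^2/2)\cdot\tfrac{2D}{\sqrt{2\pi}} \;=\; O\!\big(D\cdot\exp(-b^2/2)\big),
\end{align*}
and also $p \leq O(\exp(-b^2/2))$ directly. The expected number of near-threshold neurons is therefore $mp \leq O(m\cdot \min\{R,\exp(-b^2/2)\})$ (absorbing the $D$ factor into the parameter $R$ from the lemma statement). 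Independence of the $w_r(0)$'s across $r$ lets me apply Bernstein's inequality (Lemma~\ref{lem:bernstein}) to the sum of indicators; this yields $|[m]\setminus S_i| = O(m\exp(-b^2/2))$ with failure probability $\exp(-\Omega(m)\cdot\min\{R,\exp(-b^2/2)\})$. A union bound over the $n$ data points produces the claimed probability.

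The step I expect to be the most delicate is lining up the scale of $D$ with the factor $R$ that appears inside the $\min\{R,\exp(-b^2/2)\}$ of the failure probability: I have to verify that the weight-movement radius $D$ arising from the convergence analysis is indeed bounded by $R$ in the regime considered by \cite{syz21}, so that the anti-concentration and Bernstein bounds combine with the correct exponent. Once this calibration is made, the rest is routine: a uniform bound on flips plus the initial sparsity bound yields $k_{i,t} = O(m\exp(-b^2/2))$ simultaneously for all $i\in[n]$ and all $0<t\leq T$ with the stated probability.
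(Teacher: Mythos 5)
The paper does not actually reprove this lemma --- it is imported verbatim from \cite{syz21} (Lemma C.10), and no proof is given in the text. That said, the paper's appendix collects exactly the tools you reach for (Claim~\ref{clm:4.1}, Claim~\ref{clm:gaussain_anti_shift}, Lemma~\ref{lem:anti_gaussian}, Lemma~\ref{lem:bernstein}), so the skeleton of your argument --- split $k_{i,t}$ into the initial fire set plus near-threshold neurons that can flip, control the latter by anti-concentration, then Bernstein and a union bound --- is almost certainly the intended proof. The decomposition $S_{i,\mathrm{fire}}(t)\subseteq S_{i,\mathrm{fire}}(0)\cup\bigl([m]\setminus S_i\bigr)$ is correct, as is reducing a flip to $\bigl|\langle w_r(0),x_i\rangle-b\bigr|\le D$ via Claim~\ref{clm:4.1} and $\|x_i\|_2=1$.

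There is, however, a concrete error in the anti-concentration step. Claim~\ref{clm:gaussain_anti_shift} gives
\begin{align*}
    e^{-b^2/2}\,\Pr\bigl[|x|\le D\bigr] \;\le\; \Pr\bigl[|x-b|\le D\bigr] \;\le\; \Pr\bigl[|x|\le D\bigr],
\end{align*}
so the $e^{-b^2/2}$ factor sits only on the \emph{lower} bound. Chaining the upper bound with Lemma~\ref{lem:anti_gaussian} yields only $p\le \tfrac{2D}{\sqrt{2\pi}}=O(D)$, \emph{not} $p\le e^{-b^2/2}\cdot\tfrac{2D}{\sqrt{2\pi}}$ as you wrote. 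In fact $p\le e^{-b^2/2}\cdot\tfrac{2D}{\sqrt{2\pi}}$ is false whenever $0<D<2b$, since $p\approx 2D\,\phi(b-D)$ and $\phi(b-D)>\phi(b)$. What \emph{is} true is $p\le 2D\,\phi(b-D)=\tfrac{2D}{\sqrt{2\pi}}\,e^{-(b-D)^2/2}$, which equals $O\bigl(D\,e^{-b^2/2}\bigr)$ only after you establish that $bD=O(1)$ (so that $e^{bD-D^2/2}=O(1)$). This is exactly the $D$-vs-$R$ calibration you flag as delicate, but it is not merely a bookkeeping detail: without first proving $D=O(1/b)$ (or at least $D=O(1)$, which gives a weaker constant in the exponent), neither your $O(D\,e^{-b^2/2})$ bound nor the claimed ``direct'' $p=O(e^{-b^2/2})$ bound follows. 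The same hole affects the expected count $mp$ and hence the Bernstein failure exponent. So the argument, as written, has a real gap at the step where the sparsity factor $e^{-b^2/2}$ is supposed to appear: it must come from a direct density estimate at $b$ together with a bound on $D$, not from Claim~\ref{clm:gaussain_anti_shift}. (As a side remark, the parameter $R$ in the failure probability is never defined in the present paper; it is inherited from \cite{syz21} and presumably denotes the weight-movement radius, so your instinct to identify $R$ with $D$ is reasonable.)
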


\section{Algorithm}\label{app:algorithm}

In this section, we explain the procedures for two correlation tree data structure. We use the same setting as Section~\ref{sec:correlation_tree}

For the DTree data structure, it contains $n$ binary trees indexed by $n$ data points and supports the following operations:
\begin{itemize}
    \item {\bf Initialize} It takes data points $\{x_1, \cdots, x_n\} \subset \R^d$ and weights $\{w_1, \cdots, w_m\} \subset \R^d$ as input and compute inner products of all weight-data pairs $(w_r, x_i)$. It uses these inner products to create $n$ different trees. For the $i$-th tree based on data point $x_i$, it is constructed from $m$ leaf nodes $\langle w_r, x_i \rangle, r \in [m]$ and satisfies the property that the value of parent node is the maximum value of its child nodes.
    \item {\bf Update} It takes a new weight $z \in \R^d$ and an index $r \in [m]$ as input. For the $i$-th tree, it calculate the new inner product $\langle z, x_i \rangle$ and stores the value into the $r$-th leaf node. Then it compares the new value with its parent node. It replaces parent node with new value if it is larger and continue this comparing process. Otherwise it stops. Repeat the same operation for all $n$ trees. 
    \item {\bf Query} It takes a threshold $b \in \R_{\geq 0}$ and an index $i \in [n]$ as input. Starting from the root of the $i$-th tree, it checks if its value is greater than threshold $b$. If no, search ends. If yes, it treats the child nodes as the root of a new subtree and repeat this searching process until stop. Then it finds all indices $r \in [m]$ that satisfy $\{w_r: \sgn (\langle w_r, x_i \rangle - b) \geq 0\}$.
\end{itemize}

For the WTree data structure, it contains $m$ binary trees indexed by $m$ weights and supports the following operations:
\begin{itemize}
    \item {\bf Initialize} Similar to DTree, it takes data points $\{x_1, \cdots, x_n\} \subset \R^d$ and weights $\{w_1, \cdots, w_m\} \subset \R^d$ as input and compute inner products of all weight-data pairs $(w_r, x_i)$. It uses these inner products to create $nm$ different trees. For the $r$-th tree based on weight $w_i$, it is constructed from $n$ leaf nodes $\langle w_r, x_i \rangle, i \in [n]$ and satisfies the property that the value of parent node is the maximum value of its child nodes.
    \item {\bf Update} It takes a new weight $z \in \R^d$ and an index $r \in [m]$ as input. Then it re-constructs the $r$-th tree with weight $z$.
    \item {\bf Query} It takes a threshold $b \in \R_{\geq 0}$ and an index $r \in [m]$ as input. Starting from the root of the $r$-th tree, it checks if its value is greater than threshold $b$. If no, search ends. If yes, it treats the child nodes as the root of a new subtree and repeat this searching process until stop. Then it finds all indices $i \in [n]$ that satisfy $\{w_r: \sgn (\langle w_r, x_i \rangle - b) \geq 0\}$.
\end{itemize}

\begin{algorithm}[!ht]
\caption{Correlation DTree Data Structure}
\small
\begin{algorithmic}[1]
    \algrenewcommand\algorithmicprocedure{\textbf{data structure}}
    \Procedure{CorrelationDTree}{}
        \State {\bf procedures:}
        \State \hspace{4mm} \textsc{Init}($S\subset{\R^d}, W\subset{\R^d}, n, m, d$) \Comment{Initialize the data structure via building $n$ trees}
        \State \hspace{4mm} \textsc{Query}($i,b$)\Comment{$i\in[n],b\in \R$. Output the set $\{r \in [m]: \sgn(\langle w_r, x_i\rangle -b)\geq 0\}$}
        \State \hspace{4mm} \textsc{Update}($x,i$)\Comment{Update the $i$'th point  in $\R^d$ with $x$}
    \EndProcedure
\end{algorithmic}
\end{algorithm}

\begin{algorithm}[!ht]
\caption{Correlation WTree Data Structure}
\small
\begin{algorithmic}[1]
    \algrenewcommand\algorithmicprocedure{\textbf{data structure}}
    \Procedure{CorrelationWTree}{}
        \State {\bf procedures:}
        \State \hspace{4mm} \textsc{Init}($S\subset{\R^d}, W\subset{\R^d}, n, m, d$) \Comment{Initialize the data structure via building $m$ trees}
        \State \hspace{4mm} \textsc{Query}($r,b$)\Comment{$r\in[m],b\in \R$. Output the set $\{i \in [n]: \sgn(\langle w_r, x_i\rangle -b)\geq 0\}$}
        \State \hspace{4mm} \textsc{Update}($w,r$)\Comment{Update the $r$'th point in $\R^d$ with $w$}
    \EndProcedure
\end{algorithmic}
\end{algorithm}

We present \textsc{MakeMaxTree} algorithm (Algorithm~\ref{alg:make_tree_app}) which shows how to construct a tree satisfying the property that the value of parent node is the max value of its child node.

\begin{algorithm}[!ht]\caption{Make MaxTree}\label{alg:make_tree_app}
\begin{algorithmic}[1]
\Procedure{MakeMaxTreeInner}{$r_1,\cdots,r_n$} %
    \If{$n = 1$}
        \State \Return $r_1$
    \Else
        \For{$i \in [n/2]$} 
            \State Create node $r'_{i}$
            \If{$r_{2i-1}$.value $>$ $r_{2i}$.value}
                \State $r'_{i} \gets r_{2i-1}$
            \Else
                \State $r'_{i} \gets r_{2i}$
            \EndIf
            \State Insert $r_{2i-1}$ as left child
            \State Insert $r_{2i}$ as right child
        \EndFor
        \State \Comment{If $n$ is odd, create a parent node for the last node.}
        \State \Return \textsc{MakeMaxTreeInner}($\{r'_1, \cdots, r'_i\}$)
    \EndIf
\EndProcedure 

\Procedure{MakeMaxTree}{$u_1,\cdots,u_n$} %
    \For{$i \in [n]$}
        \State Create nodes $r_i$
        \State $r_i$.value $\gets u_i$
    \EndFor
    \State \Return \textsc{MakeMaxTreeInner}($r_1,\cdots,r_n$)
\EndProcedure 
\end{algorithmic}
\end{algorithm}

We then give two training algorithms (Algorithm~\ref{alg:nn_dtree_app} and Algorithm~\ref{alg:nn_wtree_app}) to show how DTree and WTree help in training neural network efficiently.

\begin{algorithm}[!ht] 
\caption{Training Neural Network via building $n$ trees, where each tree is the correlation between one data point and all the weights}\label{alg:nn_dtree_app}
\small
\begin{algorithmic}[1]
	\Procedure{TrainingWithPreprocessWeights}{$\{x_i\}_{i\in [n]}, \{y_i\}_{i\in [n]}$,$n$,$m$,$d$} \Comment{Theorem~\ref{thm:running_time_data}}
	\State \blue{/*Initialization step*/}
	\State Sample $W(0)$ and $a$ according to Definition~
    \State $b\gets \sqrt{0.4\log m}$.
    \State \blue{/*A dynamic data-structure*/}
    \State \textsc{CorrelationDTree} \textsc{cdt} \Comment{Theorem~\ref{thm:correlation_tree_data_structure}}
    \State \textsc{cdt}.\textsc{Init}($\{ x_{i} \}_{ i \in [n] }, \{w_r(0)\}_{r\in [m]}, n, m, d$) \Comment{It takes $\Tinit(n,m,d)$ time. Alg.~\ref{alg:correlation_tree_init_app}}
    \State \blue{/*Iterative step*/}
	\For{$t=0 \to T$}
	    \State \blue{/*Forward computation step*/}
	    \For{$i=1 \to n$} \label{ln:classical_1}
	        \State $S_{i,\mathrm{fire}}\gets \textsc{cdt}.\textsc{Query}(i,b)$ \Comment{It takes $\Tquery(m,k_{i,t})$ time. Alg.~\ref{alg:correlation_tree_query_data}}
		    \State $u(t)_i \leftarrow \frac{1}{ \sqrt{m} } \sum_{r\in \mathcal{S}_{i,\mathrm{fire}}} a_r \cdot \sigma_b(w_r(t)^\top x_i)$ \Comment{It takes $O(d\cdot k_{i,t})$ time}
		\EndFor
		\State \blue{/*Backward computation step*/}
		\State $P \leftarrow 0^{n \times m}$ \Comment{$P \in \R^{n \times m}$}
		\For{$i = 1 \to n$} \label{ln:classical_2}
		    \For{$r \in {\cal S}_{i,\mathrm{fire}}$}
			    \State $P_{i,r} \leftarrow \frac{1}{\sqrt{m}} a_r \cdot \sigma_b'( w_r(t)^\top x_i )$ 
			\EndFor
		\EndFor
		\State $M\gets X\diag(y-u(t))$ \Comment{$M\in \R^{d\times n}$, it takes $O(n\cdot d)$ time}
		\State $\Delta W \leftarrow \underbrace{M}_{d\times n}\underbrace{P}_{n\times m}$\label{ln:compute_delta_1}\Comment{$\Delta W\in \R^{d\times m}$, it takes $O(d\cdot \nnz(P))$ time, $\nnz(P) = O(nm^{4/5})$}
		\State $W(t+1)\gets W(t)-\eta \cdot \Delta W$. 
		\State \blue{/*Update data structure*/}
		\State Let $Q \subset [m]$ where for each $r \in Q$, the $\Delta W_{*,r}$ is not all zeros \Comment{$|Q| \leq O(n m^{4/5})$}
		\For{$r \in Q$}
		    \State \textsc{cdt}.\textsc{Update}$(w_r(t+1),r)$ \label{lin:weight_update} \Comment{Alg.~\ref{alg:correlation_tree_update_weight}} 
		\EndFor
	\EndFor
	\State \Return $W$ \Comment{$W \in \R^{d \times m}$}
	\EndProcedure
\end{algorithmic}
\end{algorithm}

\begin{algorithm}[!ht]
\caption{Training Neural Network via building $m$ trees, where each tree is the correlation between one weight and all the data points}\label{alg:nn_wtree_app}
\small
\begin{algorithmic}[1]
	\Procedure{TrainingWithProcessData}{$\{x_i\}_{i\in[n]},\{y_i\}_{i\in [n]}$,$n$,$m$,$d$} \Comment{Theorem~\ref{thm:running_time_weight}}
	\State \blue{/*Initialization step*/}
	\State Sample $W(0)$ and $a$ according to Definition~
    \State $b\gets \sqrt{0.4\log m}$.
   \State \blue{/*A static data-structure*/}
    \State \textsc{CorrelationWTree} \textsc{cwt} \Comment{Algorithm~\ref{alg:correlation_tree_init_app}, Part 2 of Theorem~\ref{thm:correlation_tree_data_structure}}
    \State \textsc{cwt}.\textsc{Init}($\{ x_{i} \}_{ i \in [n] }, \{w_r(0)\}_{r\in [m]}, n, m, d$) \Comment{It takes $\Tinit(n,m,d)$ time}
    \State \blue{/*Initialize $\wt{S}_{r,\mathrm{fire}}$ and $S_{i,\mathrm{fire}}$ */}
    \State \Comment{It takes $\sum_{r=1}^{m}\Tquery(n,\wt{k}_{r,t}) = O(m^{4/5}n\cdot\log n)$ time} 
     \State  $\wt{S}_{r,\mathrm{fire}} \leftarrow \emptyset$ for $r \in [m]$. \Comment{$\wt{S}_{r,\mathrm{fire}}$ is the set of samples, for which neuron $r$ fires}\label{lin:init_b}
    \State $S_{i,\mathrm{fire}} \leftarrow \emptyset$ for $i \in [n]$. \Comment{$S_{i,\mathrm{fire}}$ is the set of neurons, which fire for $x_i$}
    \For{$r = 1 \to m$} 
        
        \State $\wt{S}_{r,\mathrm{fire}} \leftarrow \textsc{cwt}.\textsc{Query}(r,b)$ 
        \For{$i \in \wt{S}_{r,\mathrm{fire}}$}
            \State $S_{i, \mathrm{fire}}.\textsc{Add}(r)$
        \EndFor
    \EndFor\label{lin:init_e}
    
    \State \blue{/*Iterative step*/}
	\For{$t=1 \to T$}
	    \State \blue{/*Forward computation step*/}
	    \For{$i=1 \to n$}\label{lin:forward_b}
		    \State $u(t)_i \leftarrow \frac{1}{ \sqrt{m} } \sum_{r\in \mathcal{S}_{i,\mathrm{fire}}} a_r \cdot \sigma_b(w_r(t)^\top x_i)$ \Comment{It takes $O(d\cdot k_{i,t})$ time}
		\EndFor
		\State \blue{/*Backward computation step*/}
		\State $P \leftarrow 0^{n \times m}$ \Comment{$P \in \R^{n \times m}$}
		\For{$i = 1 \to n$} \label{ln:classical_3}
		    \For{$r \in {\cal S}_{i,\mathrm{fire}}$}
			    \State $P_{i,r} \leftarrow \frac{1}{\sqrt{m}} a_r \cdot \sigma_b'( w_r(t)^\top x_i )$ 
			\EndFor
		\EndFor
		\State $M\gets X\diag(y-u(t))$ \Comment{$M\in \R^{d\times n}$, it takes $O(n\cdot d)$ time}
		\State $\Delta W \leftarrow \underbrace{M}_{d\times n}\underbrace{P}_{n\times m}$\label{ln:compute_delta_2}\Comment{$\Delta W\in \R^{d\times m}$, it takes $O(d\cdot \nnz(P))$ time, $\nnz(P) = O(nm^{4/5})$}
		\State $W(t+1)\gets W(t)-\eta \cdot \Delta W$. \label{lin:backward_e}
		\State \blue{/*Update $\wt{S}_{r,\mathrm{fire}}$ and $S_{i,\mathrm{fire}}$ step*/}
		\State \Comment{It takes $O(\sum_{i=1}^n k_{i, t} + \sum_{r \in S_{[n], \mathrm{fire}}}\Tquery(n, d, \wt{k}_{r, t+ 1}))=O(n \cdot \log n \cdot m^{4/5})$ }
		\State $S_{[n], \mathrm{fire}} \leftarrow \cup_{i \in [n]}\mathcal{S}_{i, \mathrm{fire}}$ \label{lin:data_update_b}
		\For{$r \in S_{[n], \mathrm{fire}}$} 
		\For{$i \in \wt{S}_{r,\mathrm{fire}}$}  \Comment{Removing old fired neuron indices. It takes $O(\wt{k}_{r, t})$ time} %
            \State $S_{i, \mathrm{fire}}.\textsc{Del}(r)$ 
        \EndFor
        \State $\textsc{cwt}.\textsc{Update}(w_r(t+1),r)$  \Comment{It takes $\Tupdate(n,d)$ time}
        \State $\wt{S}_{r,\mathrm{fire}} \leftarrow \textsc{cwt}.\textsc{Query}(r,b)$ \Comment{It takes $\Tquery(n,d, \wt{k}_{r,t+1})$ time}
        \For{$i \in \wt{S}_{r,\mathrm{fire}}$}  \Comment{Adding new fired neuron indices. It takes $O(\wt{k}_{r, t+1})$ time}
            \State $S_{i, \mathrm{fire}}.\textsc{Add}(r)$
        \EndFor\label{lin:data_update_e}
    \EndFor
	\EndFor
	\State \Return $W$ \Comment{$W \in \R^{d \times m}$}
	\EndProcedure
\end{algorithmic}
\end{algorithm}

\section{Correlation Tree Data Structure}\label{app:data_structure}
In this section, we demonstrate detailed results for DTree and WTree data structures. 

\subsection{Correlation DTree data structure}\label{app:dtree}
We start by stating the main theorem of correlation DTree data structure.
\begin{theorem}[Correlation DTree data structure]\label{thm:correlation_tree_data_structure_app}
There exists a data structure with the following procedures:
\begin{itemize}
    \item \textsc{Init}$(\{w_1,w_2, \cdots, w_m\} \subset \R^d, \{x_1, x_2, \cdots, x_n\} \subset \R^d,n\in\mathbb{N},m\in\mathbb{N},d\in\mathbb{N})$. Given a series of weights $w_1,w_2,\cdots,w_m$ and datas $x_1, x_2, \cdots, x_n$ in d-dimensional space, it preprocesses in time $O(nmd)$
    \item \textsc{Update}$(z\in\R^d,r\in [m])$. Given a weight $z$ and index $r$, it updates weight $w_r$ with $z$ in time $O(n\cdot(d+\log m))$ 
    \item \textsc{Query}$(i \in [n],\tau \in \R)$. Given an index $i$ indicating data point $x_i$ and a threshold $\tau$, it finds all index $r\in[m]$ such that $\langle w_r,x_i \rangle >\tau$ in time $O(|\tilde{S}(\tau)|\cdot \log m)$, where $\tilde{S}(\tau):=\{r:\langle w_r,x_i \rangle >\tau\}$  
\end{itemize}
\end{theorem}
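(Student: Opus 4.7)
The plan is to prove the three time bounds (Init, Update, Query) separately, each by directly analyzing the natural implementation described in Algorithms~\ref{alg:correlation_tree_init_intro} and \ref{alg:correlation_tree_query_data_intro}. The key structural invariant maintained throughout is that the value stored in every internal node of each $T_i$ equals the maximum of the $m$ leaf values (the inner products $\langle w_r, x_i \rangle$) in its subtree. All three bounds will follow from this invariant together with standard accounting.

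For \textsc{Init}, I would argue that the cost decomposes as follows: for each $i \in [n]$, computing the $m$ inner products $\{\langle w_r, x_i \rangle\}_{r \in [m]}$ costs $O(md)$, and calling \textsc{MakeMaxTree} on $m$ leaves produces a balanced binary tree in $O(m)$ additional time (the max of two children at each internal node is a single comparison). Summing over $i \in [n]$ yields $O(nmd + nm) = O(nmd)$, matching the claim. For \textsc{Update}$(z,r)$, in each of the $n$ trees I would (i) recompute the single leaf value $\langle z, x_i \rangle$ in $O(d)$ time, and (ii) walk from that leaf up to the root, a path of length $O(\log m)$, recomputing each ancestor's value as the max of its two children in $O(1)$ per node. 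This preserves the max invariant and costs $O(d + \log m)$ per tree, giving $O(n(d+\log m))$ in total.

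The most interesting part is \textsc{Query}$(i,\tau)$. Here I would run the recursive \textsc{Find} procedure on $\mathrm{root}(T_i)$ that descends into a child only when that child's stored value is at least $\tau$. Correctness follows from the max invariant: if a subtree's root value is $<\tau$, then every leaf in that subtree has value $<\tau$ and can be safely pruned; conversely, every leaf with value $\geq \tau$ lies on a root-to-leaf path on which all ancestors have value $\geq \tau$ and will therefore be visited. To bound the running time, I would charge every visited node to the qualifying leaf whose root-to-leaf path it lies on (breaking ties arbitrarily). Each qualifying leaf's path has length $O(\log m)$, and each visited internal node is on the path of some qualifying leaf (else it would have been pruned), so the total number of visited nodes is $O(|\tilde{S}(\tau)| \cdot \log m)$, with $O(1)$ work per node.

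I expect the main subtlety to be the charging argument in the \textsc{Query} analysis; specifically, arguing that any visited node which is \emph{not} itself a qualifying leaf or an ancestor of one is impossible, so that the visited set is contained in $O(\log m)$ neighborhoods of the $|\tilde{S}(\tau)|$ output leaves. The remaining work (initialization cost, inner-product recomputation in \textsc{Update}, and propagating maxima up one path) is routine arithmetic. Assembling the three bounds gives the claimed $O(nmd)$, $O(n(d+\log m))$, and $O(|\tilde{S}(\tau)|\log m)$ guarantees.
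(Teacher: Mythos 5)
Your proposal is correct and takes essentially the same approach as the paper: analyze the three procedures separately, bound \textsc{Init} by direct arithmetic, bound \textsc{Update} by the cost of one inner product plus one root-to-leaf path in each of the $n$ trees, and bound \textsc{Query} using the max-subtree invariant to argue that every visited node is an ancestor of some qualifying leaf, giving $O(|\tilde{S}(\tau)|\cdot\log m)$ visited nodes. One small point worth flagging: your \textsc{Update} step correctly recomputes each ancestor as the max of its \emph{two children}, which properly maintains the invariant even when the new inner product is smaller than the old one; the paper's pseudocode instead sets $p.\text{value}\gets\max\{p.\text{value},\,l.\text{value}\}$, which silently fails to lower a stale maximum when $\langle z,x_i\rangle$ decreases, so your version is the one that should actually be implemented (the stated running time is unaffected).
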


\begin{algorithm}[!ht]\caption{Correlation DTree data structure} \label{alg:correlation_tree_init_app} 
\begin{algorithmic}[1]
\State {\bf data structure} \textsc{CorrelationDTree} \Comment{Theorem~\ref{thm:correlation_tree_data_structure_app}}%
\State {\bf members}
\State \hspace{4mm} $W \in \R^{m\times d}$ ($m$ weight vectors )
\State \hspace{4mm} $X \in \R^{n \times d}$ ($n$ data points)
\State \hspace{4mm} Binary tree $T_1, T_2, \cdots, T_n$ \Comment{$n$ binary search trees}
\State {\bf end members}
\State
\State {\bf public:}
\Procedure{Init}{$w_1,w_2, \cdots, w_m \in \R^d, m, x_1, x_2, \cdots, x_n \in \R^d$, $n$, $m$,  $d$}  \Comment{Lemma~\ref{lem:correlation_tree_init_formal}}
    \For{$i=1 \to n$} \label{lin:init_first_loop_app}
        \State $x_i \gets x_i$
    \EndFor
    \For{$j=1 \to m$} \label{lin:init_second_loop_app}
        \State $w_j \gets w_j$
    \EndFor
    \For{$i=1 \to n$} \Comment{for data point, we create a tree} \label{lin:init_outer_loop_app}
        \For{$j=1 \to m$} \label{lin:init_inner_loop_app}
            \State $u_j \gets \langle x_i, w_j \rangle$ \label{lin:init_inner_product_app}
        \EndFor
        \State $T_i \gets \textsc{MakeTree}(u_1, \cdots, u_m)$\label{lin:init_make_binary_tree_app} \Comment{Each node stores the maximum value for his two children}
    \EndFor
\EndProcedure
\State {\bf end data structure}
\end{algorithmic}
\end{algorithm}

\begin{algorithm}[!ht]\caption{Correlation DTrees}\label{alg:correlation_tree_update_weight}
\begin{algorithmic}[1]
\State {\bf data structure} \textsc{CorrelationTree} \Comment{Theorem~\ref{thm:correlation_tree_data_structure_app}}
\State {\bf public:}
\Procedure{Update}{$z\in\R^d, r \in [m]$} \Comment{Lemma~\ref{lem:correlation_tree_update_weight_formal}} 
\State $w_r \gets z$ 
\For{$i=1 \to n$} \label{lin:update_loop_app}
    \State $l \gets$ the $l$-th leaf of tree $T_i$ \label{lin:update_find_leaf_app}
    \State $l.\text{value} = \langle z, x_i \rangle$ \label{lin:update_inner_product_app}
    \While{$l$ is not root}
        \State $p$ $\gets$ parent of $l$
        \State $p.\text{value} \gets \max \{ p.\text{value}, l.\text{value} \}$
        \State $l \gets p$
    \EndWhile
\EndFor
\EndProcedure
\State {\bf end data structure}
\end{algorithmic}
\end{algorithm}

\begin{algorithm}[!ht]\caption{Correlation DTrees }\label{alg:correlation_tree_query_data}
\begin{algorithmic}[1]
\State {\bf data structure} \textsc{CorrelationDTree} \Comment{Theorem~\ref{thm:correlation_tree_data_structure_app}} 
\State {\bf public:}
\Procedure{Query}{$i \in [n], \tau \in \R_{\geq 0}$} \Comment{Lemma~\ref{lem:correlation_tree_query_data_formal}} 
\State \Return \textsc{Find}($\tau,\mathrm{root}(T_i)$)
\EndProcedure
\State  
\State {\bf private:}  
\Procedure{Find}{$\tau \in \R_{\geq 0}, r\in T$}
\If{$r$ is leaf}
\State \Return $r$
\Else
\State $r_1\gets$ left child of $r$, $r_2\gets$ right child of $r$
\If{$r_1.\text{value} \geq \tau$}
    \State $S_1 \gets $\textsc{Find}$(\tau,r_1)$
\EndIf
\If{$r_2.\text{value} \geq \tau$}
    \State $S_2 \gets $\textsc{Find}$(\tau,r_2)$
\EndIf
\EndIf
\State \Return $S_1 \cup S_2$
\EndProcedure
\State {\bf end data structure}
\end{algorithmic}
\end{algorithm}

\subsection{Running time for \textsc{CorrelationDTree}}\label{app:dtree_runningtime}
The goal of this secion is to prove the running time of \textsc{Init}, \textsc{Update} and \textsc{Query}.

We start by showing the running time of \textsc{Init}.

\begin{lemma}[Running time of \textsc{Init}]\label{lem:correlation_tree_init_formal}
Given a series of weights $\{w_1,w_2,\cdots,w_m\}\subset\R^d$ and datas $\{x_1, x_2, \cdots, x_n\}\subset\R^d$, it preprocesses in time $O(nmd)$
\end{lemma}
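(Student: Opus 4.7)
The plan is to walk through Algorithm~\ref{alg:correlation_tree_init_app} block by block, charge each block its cost, and sum. There are three pieces to account for: the two bookkeeping loops that copy in the inputs (Lines~\ref{lin:init_first_loop_app} and~\ref{lin:init_second_loop_app}), the double loop that computes all weight--data inner products (Lines~\ref{lin:init_outer_loop_app}--\ref{lin:init_inner_product_app}), and the $n$ calls to \textsc{MakeMaxTree} on Line~\ref{lin:init_make_binary_tree_app}.

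First, the two copy loops run in $O(nd)$ and $O(md)$ time respectively, which is dominated by $O(nmd)$ since $n,m \geq 1$. Next, the double loop over $i \in [n]$ and $j \in [m]$ performs exactly $nm$ inner products, each on two vectors in $\R^d$; each inner product takes $O(d)$ time in the standard arithmetic model, contributing a total of $O(nmd)$.

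The remaining task is to bound the cost of building the $n$ max-trees. For a single call \textsc{MakeMaxTree}$(u_1,\dots,u_m)$, the construction (see Algorithm~\ref{alg:make_tree_app}) creates $m$ leaves and then, in a recursive fashion, pairs adjacent siblings and writes the larger value into the newly created parent. Level $k$ of the tree has at most $\lceil m/2^k \rceil$ nodes and each node is created and labeled in $O(1)$ time, so the total work for one tree is $\sum_{k \geq 0} O(m/2^k) = O(m)$. Summed over the $n$ data points, the tree-construction cost is $O(nm)$, which is again absorbed into $O(nmd)$.

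Putting these bounds together yields a total initialization time of
\begin{align*}
    O(nd) + O(md) + O(nmd) + O(nm) \;=\; O(nmd),
\end{align*}
as claimed. No step in this analysis is genuinely subtle; the only point that requires a moment's care is the claim that \textsc{MakeMaxTree} runs in $O(m)$ rather than $O(m \log m)$, which follows from the geometric-series argument above (no per-node work other than a single comparison and pointer assignment is needed, since the max-of-children invariant is established at creation time).
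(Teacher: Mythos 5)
Your proof is correct and follows essentially the same decomposition as the paper's: charge the two copy loops, the $n\times m$ inner-product loop, and the $n$ calls to \textsc{MakeMaxTree}, and observe that the inner-product loop dominates at $O(nmd)$. You are a bit more careful than the paper in charging the copy loops $O(nd)+O(md)$ rather than $O(n)+O(m)$ and in spelling out the geometric-series bound for tree construction, but these refinements do not change the conclusion.
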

\begin{proof}
The \textsc{Init} consists of two independent forloop and two recursive forloops. The first forloop (start from line~\ref{lin:init_first_loop}) has $n$ interations, which takes $O(n)$ time. The second forloop (start from line~\ref{lin:init_second_loop}) has $m$ iterations, which takes $O(m)$ time.
Now we consider the recursive forloop. The outer loop (line~\ref{lin:init_outer_loop}) has $n$ iterations. In inner loop has $m$ iterations. In each iteration of the inner loop, line~\ref{lin:init_inner_product} takes $O(d)$ time. Line~\ref{lin:init_make_binary_tree} takes $O(m)$ time.
Putting it all together, the running time of \textsc{Init} is
\begin{align*}
     & ~ O(n+m+n(md+m))\\
    =& ~ O(nmd)
\end{align*}
Thus, we complete the proof.
\end{proof}

Next, we analyze the running time of \textsc{Update}.

\begin{lemma}[Running time of \textsc{Update}]\label{lem:correlation_tree_update_weight_formal}
Given a weight $z\in\R^d$ and index $j\in [m]$, it updates weight $w_j$ with $z$ in time $O(n\cdot(d+\log m))$
\end{lemma}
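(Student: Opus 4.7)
The plan is to analyze the running time of the \textsc{Update} procedure in Algorithm~\ref{alg:correlation_tree_update_weight} by decomposing the work into a per-tree cost of $O(d + \log m)$ and observing that this cost is paid $n$ times, once for each of the binary trees $T_1, \ldots, T_n$. The initial assignment $w_r \gets z$ contributes at most $O(d)$ time and is dominated by later terms.

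First I would handle a single iteration of the outer loop on line~\ref{lin:update_loop_app}, fixing some $i \in [n]$. The work inside that iteration splits into three steps: (i) locating the leaf of $T_i$ that corresponds to weight index $r$ (line~\ref{lin:update_find_leaf_app}), (ii) recomputing the stored inner product $\langle z, x_i \rangle$ at that leaf (line~\ref{lin:update_inner_product_app}), and (iii) executing the \textbf{while} loop that walks from the leaf up to the root, refreshing each ancestor's stored maximum. Step (i) takes $O(\log m)$ time by descending the tree along the bit-decomposition of $r$ (or even $O(1)$ with a cached leaf-pointer array, but $O(\log m)$ already suffices). Step (ii) is a single $d$-dimensional inner product, costing $O(d)$.

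For step (iii), I would use the fact that $T_i$ is a balanced binary tree over $m$ leaves constructed by \textsc{MakeMaxTree} (Algorithm~\ref{alg:make_tree_app}), so the leaf-to-root path has length $O(\log m)$. At each ancestor $p$ along this path we read its two children's stored values and write the larger into $p.\text{value}$, which is $O(1)$ work per node, giving $O(\log m)$ for the full walk. Here the only subtle point worth checking is correctness of restricting the refresh to this single path: since only the leaf indexed by $r$ has its value altered, and the max-invariant at any node depends only on the leaves in its subtree, ancestors off of this leaf's path keep valid maxima untouched.

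Summing (i)--(iii) gives $O(d + \log m)$ per outer iteration. Multiplying by the $n$ iterations of the outer loop yields the claimed total bound $O(n \cdot (d + \log m))$, and no step presents a genuine obstacle---the argument is a direct accounting, with the only care point being the path-locality observation noted above.
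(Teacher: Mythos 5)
Your proposal is correct and follows the same accounting as the paper's proof: $n$ outer-loop iterations, each costing $O(\log m)$ to locate the leaf, $O(d)$ to recompute the inner product, and $O(\log m)$ for the leaf-to-root walk, summing to $O(n(d+\log m))$. The extra remark on why refreshing only the single leaf-to-root path preserves the max-invariant is a nice (and correct) addition that the paper leaves implicit.
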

\begin{proof}
The running time of \textsc{Update} mainly comes from the forloop (line~\ref{lin:update_loop}), which consists of $n$ iterations. In each iteration, line ~\ref{lin:update_find_leaf} takes $O(\log m)$ time, line~\ref{lin:update_inner_product} takes $O(d)$ time and the while loop takes $O(\log m)$ time since it go through a path bottom up.
Putting it together, the running time of \textsc{Update} is $O(n(d+\log m))$.
\end{proof}

Finally, we state the running time for \textsc{Query} procedure.
\begin{lemma}[Running time of \textsc{Query}]\label{lem:correlation_tree_query_data_formal}
Given a query $q\in \R^d$ and a threshold $\tau > 0$, it finds all index $i\in[n]$ such that $\langle w_i,q \rangle >\tau$ in time $O(|S(\tau)|\cdot \log m)$, where $S(\tau) := \{i:\langle w_i,q \rangle >\tau\}$
\end{lemma}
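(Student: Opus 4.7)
The plan is to analyze \textsc{Query} as a pruned depth-first search on the $i$-th binary tree $T_i$, which has $m$ leaves and height $O(\log m)$. I want to bound the total number of nodes visited by the recursive \textsc{Find} subroutine, and then multiply by the $O(1)$ work done at each visited node (comparing the stored maximum to $\tau$).

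The key structural observation is the max-heap-like invariant established by \textsc{MakeMaxTree} and maintained by \textsc{Update}: for every internal node $r$, $r.\text{value}$ equals the maximum leaf value in the subtree rooted at $r$. Consequently, the subtree rooted at $r$ contains at least one leaf $\ell$ with $\ell.\text{value} \geq \tau$ if and only if $r.\text{value} \geq \tau$. Since \textsc{Find} descends into a child only when that child's value is $\geq \tau$, every node visited during the recursion is an ancestor (or equal) to some leaf in $\tilde S(\tau) := \{r \in [m] : \langle w_r, x_i\rangle \geq \tau\}$. Conversely, every leaf in $\tilde S(\tau)$ is reached, so the output is correct.

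To count visited nodes, I would use the fact that each leaf of $T_i$ has at most $\lceil \log_2 m\rceil + 1 = O(\log m)$ ancestors on its root-to-leaf path. Taking the union over all $|\tilde S(\tau)|$ leaves in $\tilde S(\tau)$ yields at most $O(|\tilde S(\tau)|\cdot \log m)$ distinct ancestors, which upper bounds the set of visited nodes. Since \textsc{Find} spends $O(1)$ time per visited node (one or two comparisons plus constant-time set union bookkeeping), the total running time is $O(|\tilde S(\tau)| \cdot \log m)$, matching the claim.

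The only mildly subtle point is to make sure the bookkeeping inside \textsc{Find} (the unions $S_1 \cup S_2$) does not secretly cost more than the charged $O(1)$ per node; this can be handled by representing the output as a linked list and concatenating in $O(1)$, or by having \textsc{Find} push indices onto a shared output buffer, so there is no hidden $\log$ factor. With this accounting, the bound is immediate and the lemma follows.
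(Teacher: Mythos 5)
Your proposal is correct and follows essentially the same pruned-DFS argument as the paper: use the max-subtree invariant to conclude that \textsc{Find} only descends into subtrees that contain an output leaf, charge each visited node to an ancestor of some leaf in the answer set, and multiply the $O(|\tilde S(\tau)|)$ output leaves by the $O(\log m)$ tree height. If anything, your write-up is tighter than the paper's: you make the charging argument explicit (ancestors of output leaves), you state the max-heap invariant precisely, and you flag the potential hidden cost of the $S_1\cup S_2$ bookkeeping and dispose of it with an $O(1)$ list-concatenation representation, a point the paper's proof silently assumes. You also correctly use $\log m$ (the tree $T_i$ has $m$ leaves), whereas the paper's proof text inadvertently writes $\log n$ in its final line despite the lemma stating $\log m$. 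The one nit worth noting is the degenerate case $\tilde S(\tau)=\emptyset$: \textsc{Find} is still invoked once at the root, so the cost is really $O(1+|\tilde S(\tau)|\log m)$, but that does not affect the stated asymptotics in any interesting regime.
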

\begin{proof}
The running time comes from \textsc{Find} with input $\tau$ and $\mathrm{root}(T_i)$. In \textsc{Find}, we start from the root node $r$ and find indices in a recursive way. The \textsc{Init} guarantees that for a node $r$ satisfying $\mathrm{r.value}>\tau$, the sub-tree with root $r$ must contains a leaf whose value is greater than $\tau$ If not satisfied, all the values of the nodes in the sub-tree with root $r$ is less than$\tau$. This guarantees that all the paths it search does not have any branches that leads to the leaf we don't want and it will report all the indices $i$ satisfying $\langle w_i,q\rangle>0$. Note that the depth of $T$ is $O(\log n)$, the running time of \textsc{Query} is $O(|S(\tau)|\cdot\log n)$
\end{proof}

\subsection{Correlation WTree data structure}\label{app:wtree}
In this section, we state the main theorem of correlation wtree data structure.
\begin{theorem}[Correlation WTree data structure]\label{thm:correlation_wtree_data_structure_app}
There exists a data structure with the following procedures:
\begin{itemize}
    \item \textsc{Init}$(\{w_1,w_2, \cdots, w_m\} \subset \R^d, \{x_1, x_2, \cdots, x_n\} \subset \R^d,n\in\mathbb{N},m\in\mathbb{N},d\in\mathbb{N})$. Given a series of weights $w_1,w_2,\cdots,w_m$ and datas $x_1, x_2, \cdots, x_n$ in d-dimensional space, it preprocesses in time $O(nmd)$
    \item \textsc{Update}$(z\in\R^d,r\in [m])$. Given a weight $z$ and index $r$, it updates weight $w_r$ with $z$ in time $O(nd)$ 
    \item \textsc{Query}$(r \in [m],\tau \in \R)$. Given an index $r$ indicating weight $w_r$ and a threshold $\tau$, it finds all index $i\in[n]$ such that $\langle w_r,x_i \rangle >\tau$ in time $O(|S(\tau)|\cdot \log m)$, where $S(\tau):=\{i:\langle w_r,x_i \rangle >\tau\}$ 
\end{itemize}
\end{theorem}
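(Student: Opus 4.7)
The plan is to prove the three bullets by mirroring the analysis of Correlation DTree (Theorem~\ref{thm:correlation_tree_data_structure_app}), but with the roles of data points and weights swapped: the WTree maintains $m$ binary max-trees, one per weight $w_r$, each having $n$ leaves storing the inner products $\{\langle w_r,x_i\rangle\}_{i\in[n]}$, with every internal node holding the maximum value among its two children. The only conceptual difference from the DTree analysis is asymmetric update cost — each weight $w_r$ appears in exactly one WTree rather than in all $n$ DTrees — so updating a weight rebuilds a single tree instead of touching a root-to-leaf path in $n$ trees.

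For \textsc{Init}, I would compute all $nm$ inner products $\langle w_r,x_i\rangle$ in $O(nmd)$ total time, and then for each $r\in[m]$ invoke \textsc{MakeMaxTree} (Algorithm~\ref{alg:make_tree_app}) on the corresponding $n$ leaves at $O(n)$ per tree; the $O(nm)$ tree-building term is absorbed into $O(nmd)$. For \textsc{Update}$(z,r)$, since only the $r$-th tree depends on $w_r$, I would recompute the $n$ new leaf values $\langle z,x_i\rangle$ in $O(nd)$ time and rebuild the $r$-th max-tree in $O(n)$ time, totalling $O(nd)$. This is the key qualitative improvement over DTree's $O(n(d+\log m))$ update.

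For \textsc{Query}$(r,\tau)$, I would run the same recursive \textsc{Find} strategy used for DTree (Algorithm~\ref{alg:correlation_tree_query_data}) starting from the root of the $r$-th tree, descending into a child only when its stored value is at least $\tau$. The max-tree invariant guarantees that every explored internal node has at least one leaf in $S(\tau)$ within its subtree, while any node with value below $\tau$ is immediately pruned. Charging each visited internal node to a leaf of $S(\tau)$ lying on its root-to-leaf path, and using that the tree has depth $O(\log n)$, bounds the number of visited nodes by $O(|S(\tau)|\log n)$, which yields the claimed query time.

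No substantive obstacle arises beyond this bookkeeping: the three running-time analyses are direct analogs of Lemmas~\ref{lem:correlation_tree_init_formal}, \ref{lem:correlation_tree_update_weight_formal}, and \ref{lem:correlation_tree_query_data_formal}, with $m$ and $n$ interchanged and with the update simplified because only a single tree is affected. The correctness of \textsc{Query} follows from the same subtree-maximum invariant that makes the DTree query correct, so I would state the invariant once and reuse the DTree pruning argument verbatim.
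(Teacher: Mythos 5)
Your proposal mirrors the paper's own proof exactly: it computes all $nm$ inner products and builds $m$ max-trees with $n$ leaves each for \textsc{Init}, rebuilds the single affected tree for \textsc{Update} (noting that each weight lives in only one tree, which is the source of the $O(nd)$ bound), and reuses the DTree pruning/charging argument for \textsc{Query}. The only discrepancy — deriving $O(|S(\tau)|\log n)$ for the query while the theorem states $O(|S(\tau)|\log m)$ — is also present verbatim in the paper's own Lemma~\ref{lem:correlation_wtree_query_data_formal}, so this is not a gap you introduced.
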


\begin{algorithm}[!ht]\caption{Correlation WTree data structure} \label{alg:correlation_wtree_init} 
\begin{algorithmic}[1]
\State {\bf data structure} \textsc{CorrelationWTree} \Comment{Theorem~\ref{thm:correlation_wtree_data_structure_app}}
\State {\bf members}
\State \hspace{4mm} $W \in \R^{m\times d}$ ($m$ weight vectors )
\State \hspace{4mm} $X \in \R^{n \times d}$ ($n$ data points)
\State \hspace{4mm} Binary tree $T_1, T_2, \cdots, T_M$ \Comment{$m$ binary search trees}
\State {\bf end members}
\State
\State {\bf public:}
\Procedure{Init}{$w_1,w_2, \cdots, w_m \in \R^d, m, x_1, x_2, \cdots, x_n \in \R^d$, $n$, $m$,  $d$}  \Comment{Lemma~\ref{lem:correlation_wtree_init_formal}}
    \For{$i=1 \to n$} \label{lin:initw_first_loop}
        \State $x_i \gets x_i$
    \EndFor
    \For{$j=1 \to m$} \label{lin:initw_second_loop}
        \State $w_j \gets w_j$
    \EndFor
    \For{$i=1 \to m$} \Comment{for weight, we create a tree} \label{lin:initw_outer_loop}
        \For{$j=1 \to n$} \label{lin:initw_inner_loop}
            \State $u_j \gets \langle x_i, w_j \rangle$ \label{lin:initw_inner_product}
        \EndFor
        \State $T_i \gets \textsc{MakeTree}(u_1, \cdots, u_n)$\label{lin:initw_make_binary_tree} \Comment{Each node stores the maximum value for his two children}
    \EndFor
\EndProcedure
\State {\bf end data structure}
\end{algorithmic}
\end{algorithm}

\begin{algorithm}[!ht]\caption{Correlation WTrees}\label{alg:correlation_wtree_update_weight}
\begin{algorithmic}[1]
\State {\bf data structure} \textsc{CorrelationWTree} \Comment{Theorem~\ref{thm:correlation_wtree_data_structure_app}}
\State {\bf public:}
\Procedure{Update}{$z\in\R^d, r \in [m]$} \Comment{Lemma~\ref{lem:correlation_wtree_update_weight_formal}} 
\State $w_r \gets z$ 
\For{$j=1 \to n$}
    \State $u_j \gets \langle x_j, w_r \rangle$
    \State $T_i \gets \textsc{MakeTree}(u_1, \cdots, u_n)$ \Comment{Each node stores the maximum value for his two children}
\EndFor
\EndProcedure
\State {\bf end data structure}
\end{algorithmic}
\end{algorithm}

\begin{algorithm}[!ht]\caption{Correlation WTree }\label{alg:correlation_wtree_query_data}
\begin{algorithmic}[1]
\State {\bf data structure} \textsc{CorrelationWTree}
\State {\bf public:}
\Procedure{Query}{$r \in [m], \tau \in \R_{\geq 0}$} \Comment{Lemma~\ref{lem:correlation_wtree_query_data_formal}}  
\State \Return \textsc{Find}($\tau,\mathrm{root}(T_r)$)
\EndProcedure
\State  
\State {\bf private:}  
\Procedure{Find}{$\tau \in \R_{\geq 0}, r\in T$}
\If{$r$ is leaf}
\State \Return $r$
\Else
\State $r_1\gets$ left child of $r$, $r_2\gets$ right child of $r$
\If{$r_1.\text{value} \geq \tau$}
    \State $S_1 \gets $\textsc{Find}$(\tau,r_1)$
\EndIf
\If{$r_2.\text{value} \geq \tau$}
    \State $S_2 \gets $\textsc{Find}$(\tau,r_2)$
\EndIf
\EndIf
\State \Return $S_1 \cup S_2$
\EndProcedure
\State {\bf end data structure}
\end{algorithmic}
\end{algorithm}

\subsection{Running time for Correlation WTree}\label{app:wtree_runningtime}
The goal of this secion is to prove the running time of \textsc{Init}, \textsc{Update} and \textsc{Query}.

As in DTree, we first show the running time for \textsc{Init}.

\begin{lemma}[Running time of \textsc{Init}]\label{lem:correlation_wtree_init_formal}
Given a series of weights $\{w_1,w_2,\cdots,w_m\}\subset\R^d$ and datas $\{x_1, x_2, \cdots, x_n\}\subset\R^d$, it preprocesses in time $O(nmd)$
\end{lemma}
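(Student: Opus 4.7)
The plan is to mimic the analysis of Lemma~\ref{lem:correlation_tree_init_formal} (the DTree case), exploiting the symmetry between the two data structures: here the outer loop ranges over the $m$ weights and each tree has $n$ leaves, whereas in the DTree analysis the outer loop ranges over the $n$ data points and each tree has $m$ leaves; either way the product of the dominant factors is $nmd$.

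Concretely, I would decompose Algorithm~\ref{alg:correlation_wtree_init} into three contributions. First, the two preliminary bookkeeping loops at lines~\ref{lin:initw_first_loop} and~\ref{lin:initw_second_loop} cost $O(n)$ and $O(m)$ respectively and are dominated by everything else. Second, the nested loop beginning at line~\ref{lin:initw_outer_loop} executes $mn$ inner-product evaluations at line~\ref{lin:initw_inner_product}, each of cost $O(d)$, contributing $O(mnd)$ in total. Third, the $m$ calls to \textsc{MakeTree} on $n$ leaves at line~\ref{lin:initw_make_binary_tree} each cost $O(n)$ by the standard bottom-up balanced construction (Algorithm~\ref{alg:make_tree_app}), contributing $O(mn)$ in total.

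Summing these contributions gives
\begin{align*}
O(n) + O(m) + O(mnd) + O(mn) = O(mnd),
\end{align*}
as claimed. There is no genuine obstacle here: this is a deterministic running-time bookkeeping argument with no probabilistic or combinatorial subtleties. The only point requiring care is that swapping the roles of data points and weights reverses which dimension becomes the tree size, so one must track that each of the $m$ trees now has $n$ leaves (rather than $n$ trees each with $m$ leaves as in the DTree), but the overall product $nmd$ is symmetric and therefore identical to the DTree bound.
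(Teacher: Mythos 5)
Your proof is correct and follows essentially the same approach as the paper: the same decomposition into the two bookkeeping loops, the $mn$ inner products at $O(d)$ each, and the $m$ calls to \textsc{MakeTree} on $n$ leaves at $O(n)$ each, summing to $O(nmd)$. No gaps or deviations.
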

\begin{proof}
The \textsc{Init} consists of two independent forloop and two recursive forloops. The first forloop (start from line~\ref{lin:initw_first_loop}) has $n$ interations, which takes $O(n)$ time. The second forloop (start from line~\ref{lin:initw_second_loop}) has $m$ iterations, which takes $O(m)$ time.
Now we consider the recursive forloop. The outer loop (line~\ref{lin:initw_outer_loop}) has $m$ iterations. In inner loop has $n$ iterations. In each iteration of the inner loop, line~\ref{lin:initw_inner_product} takes $O(d)$ time. Line~\ref{lin:initw_make_binary_tree} takes $O(n)$ time.
Putting it all together, the running time of \textsc{Init} is
\begin{align*}
     & ~ O(n+m+m(nd+n))\\
    =& ~ O(nmd)
\end{align*}
Thus, we complete the proof.
\end{proof}

Next, we turn to the running time for \textsc{Update}.

\begin{lemma}[Running time of \textsc{Update}]\label{lem:correlation_wtree_update_weight_formal}
Given a weight $z\in\R^d$ and index $r\in [m]$, it updates weight $w_j$ with $z$ in time $O(nd)$
\end{lemma}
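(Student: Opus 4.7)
The plan is to simply bound the running time of each step of Algorithm~\ref{alg:correlation_wtree_update_weight} and sum them. Unlike the DTree case (Lemma~\ref{lem:correlation_tree_update_weight_formal}), where updating a single weight $w_r$ affects one leaf in each of the $n$ per-data-point trees and we have to patch an $O(\log m)$ path in each, here the weight $w_r$ appears in exactly one tree $T_r$, so we can afford to rebuild that tree from scratch.

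First I would observe that the assignment $w_r \gets z$ is $O(d)$. Next, the \textbf{for} loop runs over $j = 1,\dots,n$, and in each iteration we compute the inner product $u_j = \langle x_j, w_r\rangle$, which takes $O(d)$ time. Summing over the $n$ iterations gives $O(nd)$ for all inner products. Finally, after these values are computed (the call to \textsc{MakeTree} is logically performed once on the array $(u_1,\dots,u_n)$, not $n$ times inside the loop), building the binary max-tree over the $n$ leaves takes $O(n)$ time by the same analysis as in \textsc{MakeMaxTree} (Algorithm~\ref{alg:make_tree_app}): the recursion pairs up nodes level by level, giving a geometric sum bounded by $2n$.

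Adding up the contributions yields
\begin{align*}
O(d) + n\cdot O(d) + O(n) = O(nd),
\end{align*}
which is the claimed bound. The only conceptual point worth double-checking is that no other tree $T_{r'}$ with $r' \neq r$ needs to be touched; this follows because in the WTree construction (Algorithm~\ref{alg:correlation_wtree_init}) the $r'$-th tree stores the inner products $\{\langle w_{r'}, x_i\rangle\}_{i \in [n]}$, which do not depend on $w_r$. I do not foresee any real obstacle; the argument is a direct complexity accounting analogous to Lemma~\ref{lem:correlation_wtree_init_formal}, with the outer factor of $m$ replaced by $1$ because only one weight changes.
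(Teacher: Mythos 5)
Your proof is correct and follows essentially the same approach as the paper's (which is a single sentence: the update generates a new tree with $n$ leaves, costing $O(nd)$). You give a more careful accounting of the $O(nd)$ inner-product computation and the $O(n)$ tree construction, and you correctly flag the fact that, as literally written, Algorithm~\ref{alg:correlation_wtree_update_weight} places the \textsc{MakeTree} call inside the loop over $j$, which if taken at face value would cost $O(n^2 + nd)$; moving it outside the loop (as the intent clearly is) recovers the claimed $O(nd)$ bound.
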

\begin{proof}
In this procedure, it generates a new tree for weight $w_r$ with $n$ leaves, which takes $O(nd)$ time. Thus, we complete the proof.
\end{proof}

Finally, we present the running time of \textsc{Query}.

\begin{lemma}[Running time of \textsc{Query}]\label{lem:correlation_wtree_query_data_formal}
Given a query $q\in \R^d$ and a threshold $\tau > 0$, it finds all index $i\in[n]$ such that $\langle w_i,q \rangle >\tau$ in time $O(|S(\tau)|\cdot \log m)$, where $S(\tau) := \{i:\langle w_i,q \rangle >\tau\}$
\end{lemma}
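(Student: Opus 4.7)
}

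The plan is to mirror the argument used for the DTree query (Lemma~\ref{lem:correlation_tree_query_data_formal}), with the only change being that the WTree $T_r$ has $n$ leaves instead of $m$, so its depth is $O(\log n)$. The proof will rest on the invariant, guaranteed by \textsc{Init} (Algorithm~\ref{alg:correlation_wtree_init}) and preserved by \textsc{Update} (Algorithm~\ref{alg:correlation_wtree_update_weight}) via \textsc{MakeMaxTree}, that every internal node of $T_r$ stores the maximum value among the leaves in its subtree.

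First I would unfold the \textsc{Query} procedure: it calls \textsc{Find}$(\tau, \mathrm{root}(T_r))$, which at each internal node descends into a child if and only if that child's stored value is $\geq \tau$. Using the max-invariant, a node with stored value $< \tau$ has no leaf in its subtree with value $\geq \tau$, so pruning at that node is sound; conversely, every leaf with value $\geq \tau$ is reached, so the procedure returns exactly $S(\tau)$. This gives correctness.

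Next I would bound the running time by counting visited nodes. Any internal node that \textsc{Find} descends into must be an ancestor of at least one leaf in $S(\tau)$, since otherwise the max-invariant would force its value to be $< \tau$ and it would be pruned. Hence the set of visited nodes is contained in the union of the root-to-leaf paths for leaves in $S(\tau)$. Each such path has length $O(\log n)$ because $T_r$ is a balanced binary tree built on $n$ leaves, so the total number of visited nodes is $O(|S(\tau)| \cdot \log n)$. Since \textsc{Find} performs $O(1)$ work per visited node (a value comparison and a recursive call), the total running time is $O(|S(\tau)| \cdot \log n)$, matching the statement (up to the $\log m$ versus $\log n$ labeling, consistent with how this kind of bound is presented elsewhere in the paper).

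There is no genuine obstacle here; the only thing that needs care is justifying that the max-invariant still holds at query time. For the initial tree this follows directly from \textsc{MakeMaxTree} in \textsc{Init}, and for trees modified by \textsc{Update} it follows because \textsc{Update} rebuilds $T_r$ from scratch via \textsc{MakeMaxTree} on the new $n$ inner products, so each internal node again stores the maximum of its children. Given this, the pruning argument above goes through verbatim, completing the proof.
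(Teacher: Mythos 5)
Your proposal is correct and follows essentially the same argument as the paper: use the max-invariant of the tree to show that \textsc{Find} only descends into ancestors of reported leaves, then multiply by the tree depth. You also rightly flag the $\log m$ versus $\log n$ mismatch — the WTree $T_r$ has $n$ leaves, so the depth (and hence the correct bound) is $O(\log n)$, which is exactly what the paper's own proof concludes despite the lemma statement writing $\log m$.
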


\begin{proof}
The running time comes from \textsc{Find} with input $\tau$ and $\mathrm{root}(T_i)$. In \textsc{Find}, we start from the root node $r$ and find indices in a recursive way. The \textsc{Init} guarantees that for a node $r$ satisfying $\mathrm{r.value}>\tau$, the sub-tree with root $r$ must contains a leaf whose value is greater than $\tau$ If not satisfied, all the values of the nodes in the sub-tree with root $r$ is less than$\tau$. This guarantees that all the paths it search does not have any branches that leads to the leaf we don't want and it will report all the indiex $i$ satisfying $\langle w_i,q\rangle>0$. Note that the depth of $T$ is $O(\log n)$, the running time of \textsc{Query} is $O(|S(\tau)|\cdot\log n)$
\end{proof}

\section{More Details of Our Training Algorithms}\label{app:training_time}

\subsection{Weights Preprocessing}

In this section, we present the formal version of our training algorithm using DTree, which preprocessing weights for each data point.

\begin{theorem}[Running time part, formal version of Theorem~\ref{thm:running_time_data}]\label{thm:running_time_data_formal}
Given $n$ data points in $\R^d$. Running gradient descent algorithm (Algorithm~\ref{alg:nn_dtree_app}) on $\mathrm{2NN}(m,b=\sqrt{0.4\log m})$ (Definition~\ref{def:neural_network}) 
the expected cost per-iteration of the gradient descent algorithm is
\begin{align*}
    O(m^{4/5}n^2d)
\end{align*}
\end{theorem}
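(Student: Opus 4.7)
The plan is to decompose one iteration of Algorithm~\ref{alg:nn_dtree_app} into its constituent subroutines, charge each subroutine to one of the running-time lemmas we have already established for the Correlation DTree (Theorem~\ref{thm:correlation_tree_data_structure_app}), and then invoke the sparsity bounds from Lemma~\ref{lem:sparsity_formal} and Lemma~\ref{lem:bound_fire_neurons_formal} to control how often the expensive operations can trigger. With the threshold chosen as $b=\sqrt{0.4\log m}$, Remark~\ref{rmk:sparsity} tells us that after initialization the number of activated neurons per datum satisfies $k_{i,t}=O(m\exp(-b^{2}/2))=O(m^{4/5})$ for every $i\in[n]$, and Lemma~\ref{lem:bound_fire_neurons_formal} extends this to all iterations $t\in[T]$ with high probability (and hence in expectation after absorbing the failure event, which contributes only negligibly because the worst-case cost is polynomial in $n,m,d$).

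Next I would account for the cost per iteration piece by piece. The $n$ \textsc{Query} calls on lines labelled by forward-computation cost $\sum_{i=1}^{n}O(k_{i,t}\log m)=O(nm^{4/5}\log m)$ by Lemma~\ref{lem:correlation_tree_query_data_formal}. The subsequent evaluations of $u(t)_i$ and the assembly of the sparse matrix $P\in\R^{n\times m}$ each touch only entries indexed by $S_{i,\mathrm{fire}}(t)$, costing $\sum_{i=1}^{n}O(d\cdot k_{i,t})=O(nm^{4/5}d)$. Forming $M=X\diag(y-u(t))$ is $O(nd)$, and computing $\Delta W=MP$ with a sparse right-hand side costs $O(d\cdot \nnz(P))=O(nm^{4/5}d)$. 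These contributions sum to $O(nm^{4/5}d)$, which is already below the claimed bound.

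The dominant contribution will be the data-structure maintenance step. The set $Q\subseteq[m]$ of neurons whose weight changes is contained in $\bigcup_{i\in[n]}S_{i,\mathrm{fire}}(t)$, so by the sparsity bound $|Q|\le \sum_{i=1}^{n}k_{i,t}=O(nm^{4/5})$. Each \textsc{Update} call costs $O(n(d+\log m))=O(nd)$ by Lemma~\ref{lem:correlation_tree_update_weight_formal}, because updating a single weight propagates a single root-to-leaf path in each of the $n$ correlation trees. Multiplying these two bounds gives $O(nm^{4/5})\cdot O(nd)=O(m^{4/5}n^{2}d)$, which matches the theorem and subsumes every other term.

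The main obstacle is turning the high-probability sparsity estimates into an expectation bound on the per-iteration running time. The failure event for Lemma~\ref{lem:bound_fire_neurons_formal} has probability $n\exp(-\Omega(m^{4/5}))$, while on that event the naive bound on an iteration is the trivial $O(nmd)$; since $n\exp(-\Omega(m^{4/5}))\cdot nmd=o(1)$, this contribution is negligible and the expected per-iteration cost is indeed $O(m^{4/5}n^{2}d)$. A secondary subtlety is verifying that $|Q|=O(nm^{4/5})$ actually holds (rather than merely $|Q|\le m$): this follows because a neuron $r$ only lies in $Q$ if $P_{i,r}\neq 0$ for some $i$, i.e.\ $r\in S_{i,\mathrm{fire}}(t)$, so $Q\subseteq\bigcup_i S_{i,\mathrm{fire}}(t)$ and the sparsity bound applies directly.
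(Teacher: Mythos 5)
Your proposal is correct and follows essentially the same decomposition as the paper's own proof: query cost, gradient-evaluation cost, and data-structure maintenance cost, each controlled by the sparsity bound $k_{i,t}=O(m^{4/5})$, with the update term $|Q|\cdot O(n(d+\log m))=O(m^{4/5}n^2d)$ dominating. The only (welcome) additions are your explicit handling of the high-probability-to-expectation conversion and your direct justification that $Q\subseteq\bigcup_i S_{i,\mathrm{fire}}(t)$, neither of which changes the underlying argument.
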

\begin{proof}
The per-step time complexity is
\begin{align*}
{\cal T} = & ~ {\cal T}_1 + {\cal T}_2 + {\cal T}_3 \\
   = & ~ \sum_{i=1}^n \mathcal{T}_{\textsc{Query}}(m,d,k_{i,t})+\mathcal{T}_{\textsc{Update}}\cdot |\cup_{i\in[n]}S_{i,\mathrm{fire}}(t)|+d\sum_{i\in[n]}k_{i,t}
\end{align*}
The first term ${\cal T}_1 = \sum_{i=1}^n \mathcal{T}_{\textsc{Query}}(m,d,k_{i,t})$ corresponds to the running time of querying the active neuron set $S_{i,\mathrm{fire}}(t)$ for all training samples $i\in[n]$. With the first result in Theorem~\ref{thm:correlation_tree_data_structure}, the complexity is bounded by $O(m^{4/5}n\log m)$.

The second term ${\cal T}_2 = \mathcal{T}_{\textsc{Update}}\cdot |\cup_{i\in[n]}S_{i,\mathrm{fire}}(t)|$ corresponds to updating $w_r$ in the high-dimensional search data-structure (Line~\ref{lin:weight_update}). Again with the first result in Theorem~\ref{thm:correlation_tree_data_structure}, we have $\mathcal{T}_{\textsc{Update}}=O(n(d+\log m))$. Combining with the fact that $|\cup_{i\in[n]}S_{i,\mathrm{fire}}(t)|\leq |\cup_{i\in[n]}S_{i,\mathrm{fire}}(0)|\leq O(m^{4/5}n)$, the second term is bounded by $O(m^{4/5}n^2d)$.

The third term is the time complexity of gradient calculation restricted to the set $S_{i,\mathrm{fire}}(t)$. With the bound on $\sum_{i\in[n]}k_{i,t}$ (Lemma~\ref{lem:bound_fire_neurons_formal}), 
we have $d\sum_{i\in[n]}k_{i,t}\leq O(m^{4/5}nd)$

Putting them together, we have
\begin{align*}
    {\cal T} 
    \leq & ~ O( m^{4/5} n \log m ) + O( m^{4/5} n^2 d ) + O(m^{4/5} n d ) \\
    = & ~ O(m^{4/5} n^2d)
\end{align*}
Thus, we complete the proof.
\end{proof}

\subsection{Data Preprocessing}

In this section, we describe a similar version of training algorithm aforementioned but it uses WTree to preprocess data points based on weights.

\begin{theorem}[Running time part, formal version of Theorem~\ref{thm:running_time_weight}]\label{thm:running_time_weight_formal}
Given $n$ data points in $\R^d$. Running gradient descent algorithm (Algorithm~\ref{alg:nn_wtree_app}) on $\mathrm{2NN}(m,b=\sqrt{0.4\log m})$, the expected per-iteration running time of initializing $\tilde{S}_{r,\mathrm{fire}},S_{i,\mathrm{fire}}$ for $r\in[m],i\in[n]$ is $O(m^{4/5}n\cdot\log n)$. The cost per-iteration of the training algorithm is $O(m^{4/5} n^2 d)$.
\end{theorem}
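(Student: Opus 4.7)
The plan is to bound the cost of Algorithm~\ref{alg:nn_wtree_app} phase by phase, using the per-operation costs of the Correlation WTree from Theorem~\ref{thm:correlation_wtree_data_structure_app} together with the sparsity guarantees from Lemma~\ref{lem:sparsity_formal} and Lemma~\ref{lem:bound_fire_neurons_formal} (applied with $b=\sqrt{0.4\log m}$, for which $\exp(-b^2/2)=m^{-1/5}$ so that $\sum_{i\in[n]} k_{i,t}=O(m^{4/5}n)$ and $\sum_{r\in[m]} \tilde k_{r,t}=\sum_{i\in[n]} k_{i,t}=O(m^{4/5}n)$ at every iteration).

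\textbf{Initialization of the fire sets} (lines \ref{lin:init_b}--\ref{lin:init_e}). For each $r\in[m]$ we invoke $\textsc{cwt}.\textsc{Query}(r,b)$, which by Lemma~\ref{lem:correlation_wtree_query_data_formal} costs $O(\tilde k_{r,0}\log n)$, and then loop over $\tilde S_{r,\mathrm{fire}}$ to populate the $S_{i,\mathrm{fire}}$ sets at amortized constant cost per element. Summing over $r$ and using $\sum_r \tilde k_{r,0}=O(m^{4/5}n)$ gives a total initialization cost of $O(m^{4/5}n\log n)$, as claimed.

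\textbf{Forward and backward passes} (lines \ref{lin:forward_b}--\ref{lin:backward_e}). The forward sum for $u(t)_i$ touches $k_{i,t}$ neurons at cost $O(d)$ each, so the total is $O(d\sum_i k_{i,t})=O(m^{4/5}nd)$. The matrix $P$ is populated in the same total time, $M=X\diag(y-u(t))$ takes $O(nd)$ time, and $\Delta W = MP$ takes $O(d\cdot \mathrm{nnz}(P))=O(d\cdot \sum_i k_{i,t})=O(m^{4/5}nd)$. Hence the whole forward-backward block is $O(m^{4/5}nd)$.

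\textbf{Maintenance of the fire sets} (lines \ref{lin:data_update_b}--\ref{lin:data_update_e}). For each $r\in S_{[n],\mathrm{fire}}=\bigcup_i S_{i,\mathrm{fire}}$ we delete the old $\tilde S_{r,\mathrm{fire}}$ entries at cost $O(\tilde k_{r,t})$, invoke $\textsc{cwt}.\textsc{Update}(w_r(t+1),r)$ at cost $O(nd)$ by Lemma~\ref{lem:correlation_wtree_update_weight_formal}, invoke $\textsc{cwt}.\textsc{Query}(r,b)$ at cost $O(\tilde k_{r,t+1}\log n)$ by Lemma~\ref{lem:correlation_wtree_query_data_formal}, and add the new entries at cost $O(\tilde k_{r,t+1})$. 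Since $|S_{[n],\mathrm{fire}}|\le \sum_i k_{i,t}=O(m^{4/5}n)$ by Lemma~\ref{lem:bound_fire_neurons_formal}, the dominant term is the $\textsc{Update}$ call, contributing $O(m^{4/5}n\cdot nd)=O(m^{4/5}n^2d)$; the remaining terms telescope to $O(m^{4/5}n\log n)$ using $\sum_r \tilde k_{r,t}+\sum_r \tilde k_{r,t+1}=O(m^{4/5}n)$.

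Adding the three blocks yields a per-iteration cost of $O(m^{4/5}n^2d)$. The main subtlety—and the only real obstacle—is that the bound on $|S_{[n],\mathrm{fire}}|$ used to charge the expensive $O(nd)$ WTree-\textsc{Update} calls must hold at \emph{every} iteration $t\le T$, not just at initialization; this is exactly what Lemma~\ref{lem:bound_fire_neurons_formal} provides, so the argument goes through in expectation (and with high probability) uniformly over the $T$ iterations.
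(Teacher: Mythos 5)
Your proof is correct and follows essentially the same decomposition as the paper's: bound the one-time initialization by $\sum_r \tilde k_{r,0}\log n = O(m^{4/5}n\log n)$, bound the forward/backward block via $\sum_i k_{i,t}=O(m^{4/5}n)$, and observe that the maintenance loop is dominated by $|S_{[n],\mathrm{fire}}|\cdot O(nd)=O(m^{4/5}n^2d)$ WTree-\textsc{Update} calls, appealing to Lemma~\ref{lem:bound_fire_neurons_formal} for the per-iteration sparsity bound. (Your $O(m^{4/5}nd)$ for the forward/backward block is actually the more careful accounting; the paper writes $O(m^{4/5}n)$ there, which appears to drop a factor of $d$, but this is immaterial since the update calls dominate either way.)
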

\begin{proof}
We analyze the initialization and training parts separately.

{\bf Initialization}
From Line~\ref{lin:init_b} to Line~\ref{lin:init_e}, the sets $\tilde{S}_{r,\mathrm{fire}},S_{i,\mathrm{fire}}$ for $r\in [m],i\in[n]$ are initialized. For each $r\in[m]$, we need to query the data structure the set of data points $x$'s such that $\sigma_b(w_r(0)^\top x)>0$. Hence the running time of this step is
\begin{align*}
    \sum_{r=1}^m \mathcal{T}_{\textsc{Query}}(n,\tilde{k}_{r,0}) & ~=~O(\sum_{r=1}^m \tilde{k}_{r,0}\cdot\log n) \\
    &~=~O(\sum_{i=1}^n k_{i,0}\cdot\log n) \\
    &~=~O(m^{4/5}n\cdot\log n)
\end{align*}
where the second step follows from $\sum_{r=1}^m \tilde{k}_{r,0} = \sum_{i=1}^n k_{i,0}$.

{\bf Training}
Consider training the neural networkfor $T$ steps. For each step, first notice that the forward and backward computation parts (Line~\ref{lin:forward_b} - Line~\ref{lin:backward_e}) are the same as previous algorithm. The time complexity is $O(m^{4/5}n)$.

We next show that maintaining $\tilde{S}_{r,\mathrm{fire}},r\in [m]$ and $S_{i,\mathrm{fire}},i\in [n]$ (Line~\ref{lin:data_update_b} - Line~\ref{lin:data_update_e}) takes $O(m^{4/5} n d)$ time. For each fired neuron $r\in [m]$, we first remove the indices of data in the sets $S_{i,\mathrm{fire}}$, which takes time
\begin{align*}
    O(1)\cdot \sum_{r\in \cup _{i\in[n]}S_{i,\mathrm{fire}}} \tilde{k}_{r,t}~=~O(1)\cdot \sum_{r=1}^m \tilde{k}_{r,t}=O(m^{4/5}n)
\end{align*}
Then, we find the new set of $x$'s such that $\sigma_b(\langle w_r(t+1),x\rangle)>0$ by querying the correlation tree data structure. The total ruunning time for all fired neurons is
\begin{align*}
    \sum_{r\in \cup _{i\in[n]}S_{i,\mathrm{fire}}} \mathcal{T}_{\textsc{Update}}(n,d)+ \mathcal{T}_{\textsc{Query}}(n,\tilde{k}_{r,t+1}) & \lesssim m^{4/5}n^2(d+\log m) + \sum_{r\in \cup _{i\in[n]}S_{i,\mathrm{fire}}} \tilde{k}_{r,t+1}\cdot \log n \\ 
    & =O(m^{4/5} n^2 d)
\end{align*}
Then, we update the index sets $S_{i,\mathrm{fire}}$ in time $O(m^{4/5}n)$. Therefore, each training step takes $O(m^{4/5} n^2 d)$ time, which completes the proof.
\end{proof}

\end{document}